\definecolor{lightgray}{RGB}{215,215,215}
\newcommand{\vertiii}[1]{{\left\vert\kern-0.25ex\left\vert\kern-0.25ex\left\vert #1 
    \right\vert\kern-0.25ex\right\vert\kern-0.25ex\right\vert}}
\theoremstyle{definition}
\newtheorem{lemma}{Lemma}
\newtheorem{proposition}{Proposition}
\theoremstyle{definition} 
\theoremstyle{definition}
\newcommand{\rmnum}[1]{\romannumeral #1}
\newcommand{\Rmnum}[1]{\expandafter\@slowromancap\romannumeral #1@}
\title{Optimization for L1-Norm Error Fitting via Data Aggregation} 
\author{Young Woong Park\\
Ivy College of Business\\
Iowa State University, Ames, IA, USA.\\
ywpark@iastate.edu\\
}
\date{\today}
\begin{document}

\maketitle

\begin{abstract}
We propose a data aggregation-based algorithm with monotonic convergence to a global optimum for a generalized version of the L1-norm error fitting model with an assumption of the fitting function. The proposed algorithm generalizes the recent algorithm in the literature, aggregate and iterative disaggregate (AID), which selectively solves three specific L1-norm error fitting problems. With the proposed algorithm, any L1-norm error fitting model can be solved optimally if it follows the form of the L1-norm error fitting problem and if the fitting function satisfies the assumption. The proposed algorithm can also solve multi-dimensional fitting problems with arbitrary constraints on the fitting coefficients matrix. The generalized problem includes popular models such as regression and the orthogonal Procrustes problem. The results of the computational experiment show that the proposed algorithms are faster than the state-of-the-art benchmarks for L1-norm regression subset selection and L1-norm regression over a sphere. Further, the relative performance of the proposed algorithm improves as data size increases.
\end{abstract}
{\bf Keywords:} Data Aggregation, Aggregate and Iterative Disaggregate, Regression, Principal Component Analysis

\section{Introduction}

In many statistical learning models, data fitting is one of the primary tasks of the analysis. Data fitting is the process of fitting a model to the observed data and analyzing the errors and quality of the model fitting. In the popular fitting approach, the method of least squares, the $L_2$-norm is used to measure fitting errors. However, the $L_2$-norm is known to be sensitive to outliers, and the $L_1$-norm has been widely used to build a robust model. While many $L_2$-norm-based models have analytical or closed form solutions, $L_1$-norm-based models are often not trivial to solve. Hence, $L_1$-norm error fitting models are receiving increasing attention \cite{Park2017kais}.

In this paper, we consider a generalized version of the $L_1$-norm error fitting model 
\begin{equation}
\label{def_l1_fit} E^* = \min_{X \in \Phi} \| B - f(X,A) \|_1,
\end{equation}
where $B$ is the observed target data (dependent variable or response variable data), $A$ is the feature data (independent variable or explanatory variable data), $X$ is the model parameter matrix in a constrained space $\Phi$, and $f$ is a fitting (mapping) function. Function $f$ is a fitting function of the learning model, where it transforms $A$ to the space of $B$. Hence, in the rest of the paper, we call $f$ a mapping function. For the $L_1$-norm error fitting optimization problem described above with an assumption on mapping function $f$, we propose a data aggregation-based algorithm with guaranteed global optimum by extending the work of Park and Klabjan \cite{park15aid}.

Not only for optimization problems in statistical learning models but also for such problems in many application areas, the capability of solving problems with a large volume of data becomes increasingly important as the quantity of available data continues to expand. However, many existing algorithms do not scale well. To manage a large volume of data, data aggregation has been widely used to obtain approximate solutions in fields such as network and transportation \cite{Balas:65,barmann2016solving,Barmann2015,Clautiaux2017467,Evans:83,Geisberger}, location \cite{Current1990,Jang20151}, production \cite{Newman20071205}, stochastic programs \cite{song2015adaptive,van2016adaptive}, and machine learning \cite{Evgeniou-Pontil:02,park15aid,Yu-etal:03,Yu-etal:05}. In fact, data aggregation is one of the most natural approaches when a large volume of data must be analyzed. Hence, combined with the increasing quantity of data, data aggregation work also is the subject of an increasing number of research activities.

Consider a transportation problem with many demand points (at destination). By assuming that close demand points can be combined into one demand point without significant accuracy loss, demand points can be aggregated based on their locations. The resulting data is small enough to obtain an approximate solution to the original problem. After obtaining an approximate solution, a natural attempt is to use less-coarsely aggregated data, because it could give a better approximate solution to the original problem. Rogers et al. \cite{Rogers-etal:91} defined an iterative procedure of these steps as Iterative Aggregation and Disaggregation (IAD), where the framework iteratively passes information between the original problem and the aggregated problem. The IAD framework had been studied for mathematical programming problems, such as linear programming \cite{Mendelssohn:80, Shetty-Taylor:87, Vakhutinsky-etal:87, ZipkinPHD77} and 0-1 integer programming \cite{Chvatal-Hammer:77, Hallefjord-Storoy:90}, in the early 1990s. The reader is referred to Rogers et al. \cite{Rogers-etal:91} and Litvinchev and Tsurkov \cite{Litvinchev-Tsurkov-book} for comprehensive literature reviews regarding aggregation techniques applied to optimization problems.

Recently, optimality-preserving algorithms based on data aggregation have been proposed for machine learning problems \cite{park15aid}, two-stage stochastic programs \cite{song2015adaptive,van2016adaptive}, and network optimization \cite{Barmann2015} with demonstrated computational efficiency. Park and Klabjan \cite{park15aid} developed a data aggregation-based algorithm, called Aggregate and Iterative Disaggregate (AID), for least absolute deviation regression and support vector machines as well as semi-supervised support vector machines. The computational result shows that the proposed algorithm is competitive when the data size is large. Song and Luedtke \cite{song2015adaptive} developed an iterative algorithm to solve two-stage stochastic programs with fixed recourse, where clusters are iteratively refined until it yields an optimal solution. van Ackooij et al. \cite{van2016adaptive} developed an algorithm by combining data aggregation and level decomposition to solve two-stage stochastic programs with fixed recourse. B{\"a}rmann et al. \cite{Barmann2015} developed a data aggregation-based algorithm for network design problems, and the result shows that the data aggregation-based algorithm is competitive for large-scale network optimization problems.

Our contributions are summarized in the following. 
\begin{enumerate}
\item For the $L_1$-norm error fitting problem defined in \eqref{def_l1_fit} with an assumption on $f$ that will be introduced in Section \ref{section_AID_for_L1norm_fitting}, we propose a data aggregation-based algorithm that monotonically converges to a global optimum. Our algorithm is distinguished from the algorithm of Park and Klabjan \cite{park15aid} because their algorithm is selectively solving three specific machine learning problems, whereas we consider a generalized $L_1$-norm error fitting problem. To solve a new problem, Park and Klabjan \cite{park15aid} suggest developing tailored settings and sub-routines for AID, whereas any problem following the form and assumption in Section \ref{section_AID_for_L1norm_fitting} can be solved optimally by the proposed algorithm. 
\item Given an algorithm solving the original problem optimally, our algorithm can solve multi-dimensional fitting problems with arbitrary constraints on the fitting coefficients $X$, where the response $B$ (to be fitted) is a matrix and the constraint set $\Phi$ includes hard constraints such as integer and nonconvex constraints. To the best of our knowledge, our algorithm is the first data aggregation-based algorithm solving multi-dimensional fitting problems with arbitrary constraints including integer and nonconvex constraints.
\item The computational experiments show that our algorithm outperforms the state-of-the-art benchmarks for $L_1$-norm regression subset selection and $L_1$-norm regression over a sphere, and the improvement in computation time increases as data size increases and as the problem being considered is more difficult to solve. The heuristic version of AID for $L_1$-norm PCA also shows competitive performance when $m$ and $p$ are large. 
\item Finally, we discuss characteristics and implementation issues of AID, which have not been studied before. The discussion answers the questions such as (\rmnum{1}) when should we use AID, (\rmnum{2}) what is the impact of initial clustering algorithm and noise levels in the data, and (\rmnum{3}) how to balance the speed and accuracy of AID.
\end{enumerate}

\subsection{Review of Aggregate and Iterative Disaggregate}
\label{subsection_aid}

In this section, we review the AID algorithm originally proposed by Park and Klabjan \cite{park15aid}. We assume that we are given a matrix with entries (rows) and attributes (columns). The basic terms of AID \cite{park15aid} are summarized below.
\begin{itemize}[noitemsep]
\item[-] \textit{Clusters}: The groups into which the entries of the original data are partitioned. Each original entry must belong to exactly one cluster. 
\item[-] \textit{Aggregated entry}: Representative of each cluster. This aggregated entry is in the same dimension (attribute space) as the original entries. 
\item[-] \textit{Aggregated data}: The collection of the aggregated entries. 
\item[-] \textit{Aggregated problem}: A similar optimization problem to the original problem, but one that is defined based on the aggregated data.
\item[-] \textit{Declustering}: The procedure of partitioning a cluster into two or more sub-clusters.
\end{itemize}

The AID algorithm requires several necessary components tailored to a particular optimization problem or a machine learning model.
\begin{itemize}[noitemsep]
\item[-] Definition of the aggregated data
\item[-] Declustering procedures (and criteria)
\item[-] Definition of the aggregated problem (typically a weighted version of the problem with the aggregated data)
\item[-] Optimality condition
\end{itemize}
Park and Klabjan \cite{park15aid} state that a clustering procedure must be tailored. However, we believe that any clustering procedure can be used to define clusters for aggregated data, and we removed the clustering procedure decision from the above necessary components. In fact, we believe that ``what data should be used for clustering" is a more critical decision than ``what clustering procedure should be used" based on the discussions in Sections \ref{section_discussion_4} and \ref{section_discussion_5}.

The overall algorithmic framework is presented in Algorithm \ref{algo_aid}. It is slightly modified from the version presented in Park and Klabjan \cite{park15aid}, as we add additional termination criteria based on the optimality gap of AID. The algorithm is initialized by defining clusters of the original entries in Line 1. In each iteration (Lines 3 - 5), aggregated data is defined by the current clusters, and the algorithm solves the aggregated problem. If the obtained solution to the aggregated problem satisfies the optimality condition, then the algorithm terminates with an optimal solution to the original problem. Otherwise, the selected clusters (violating the optimality condition) are declustered based on the declustering criteria, and new aggregated data is created. The algorithm continues until the optimality condition is satisfied or the optimality gap of AID is smaller than the tolerance. Observe that the algorithm is finite, as we must stop when each cluster is an entry of the original data. In Park and Klabjan \cite{park15aid} and the computational experiment section of this paper, it is shown that the algorithm terminates much earlier in practice. Over 1,000 runs of AID for the main experiment presented in Sections \ref{subsection_exp_reg_mip} - \ref{section_experiment_l1pca}, there is no single case such that AID terminates with the aggregated data becomes equivalent to the original data.

\begin{algorithm}[ht]
\caption{AID (\textit{tol})}
\label{algo_aid}                           
\begin{algorithmic}[1]    
\REQUIRE \textit{tol} (tolerance for optimality gap)
\STATE Define initial clusters
\STATE \textbf{Do}
\STATE \qquad Create aggregated data and solve aggregated problem
\STATE \qquad Check optimality condition
\STATE \qquad \textbf{if} optimality condition is violated \textbf{then} decluster the current clusters
\STATE \textbf{While} optimality condition is not satisfied or optimality gap is greater than \textit{tol}
\end{algorithmic}
\end{algorithm}

In Figure \ref{fig:ex_decluster} (adopted from Park and Klabjan \cite{park15aid}), the concept of the algorithm is illustrated. In Figure \ref{fig:ex_decluster_1}, small circles represent the entries of the original data. They are partitioned into three clusters (large dotted circles), where the crosses represent the aggregated data (three aggregated entries). In the first iteration of AID, the aggregated problem with three aggregated entries in Figure \ref{fig:ex_decluster_1} is solved. Suppose that the aggregated solution does not satisfy the optimality condition and that the declustering criteria decide to partition all three clusters. In Figure \ref{fig:ex_decluster_2}, each cluster in Figure \ref{fig:ex_decluster_1} is split into two sub-clusters by the declustering procedure. Suppose that the optimality condition is satisfied after several iterations. Then, AID terminates with guaranteed optimality. Figure \ref{fig:ex_decluster_3} represents possible final clusters after several iterations from Figure \ref{fig:ex_decluster_2}. Observe that some of the clusters in Figure \ref{fig:ex_decluster_2} remain the same in Figure \ref{fig:ex_decluster_3}, because the clusters are selectively declustered.

\begin{figure}[ht]
     \begin{center}
        \subfigure[Initial clusters]{%
           \includegraphics[scale=0.3]{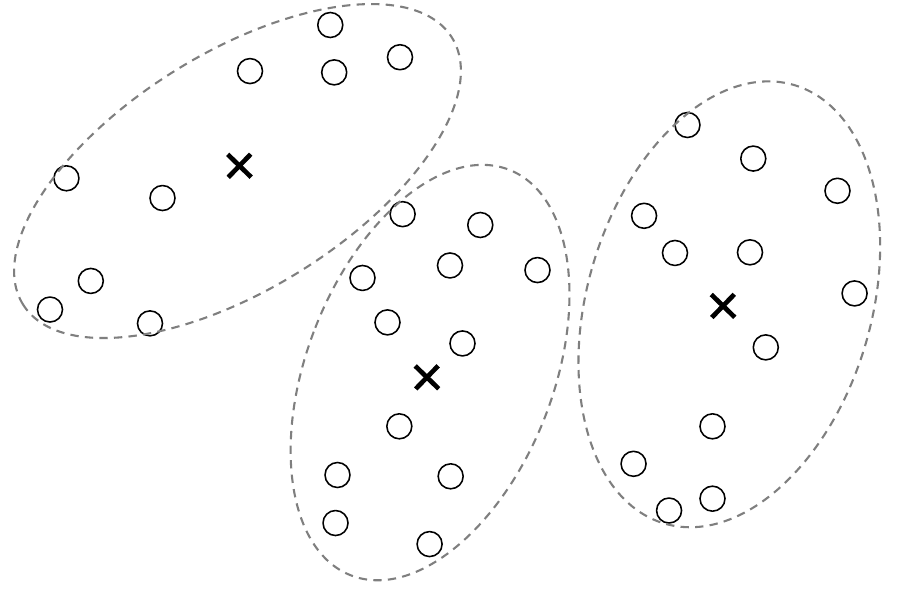} \label{fig:ex_decluster_1}
        }\qquad
        \subfigure[Declustered]{%
           \includegraphics[scale=0.3]{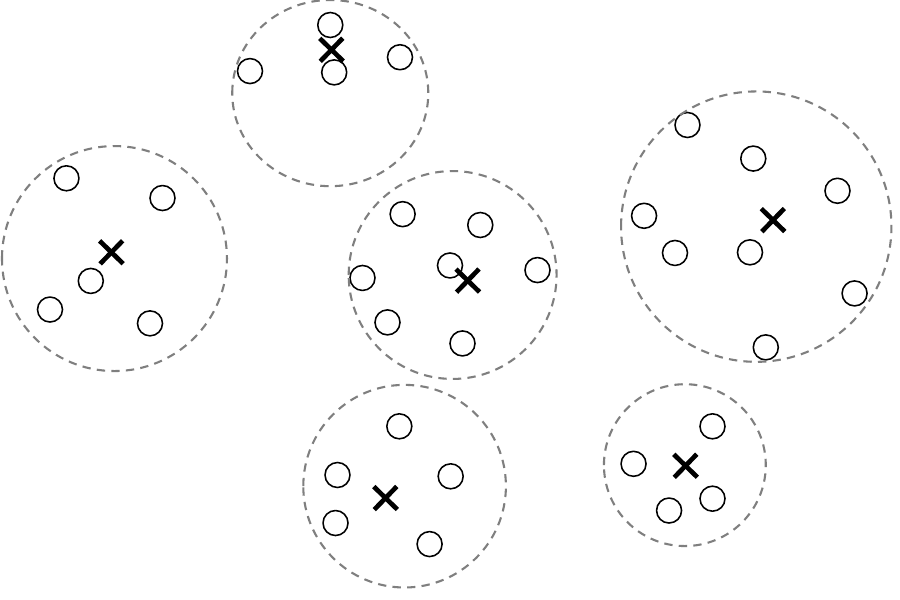} \label{fig:ex_decluster_2}
        }\qquad
        \subfigure[Final clusters]{%
           \includegraphics[scale=0.3]{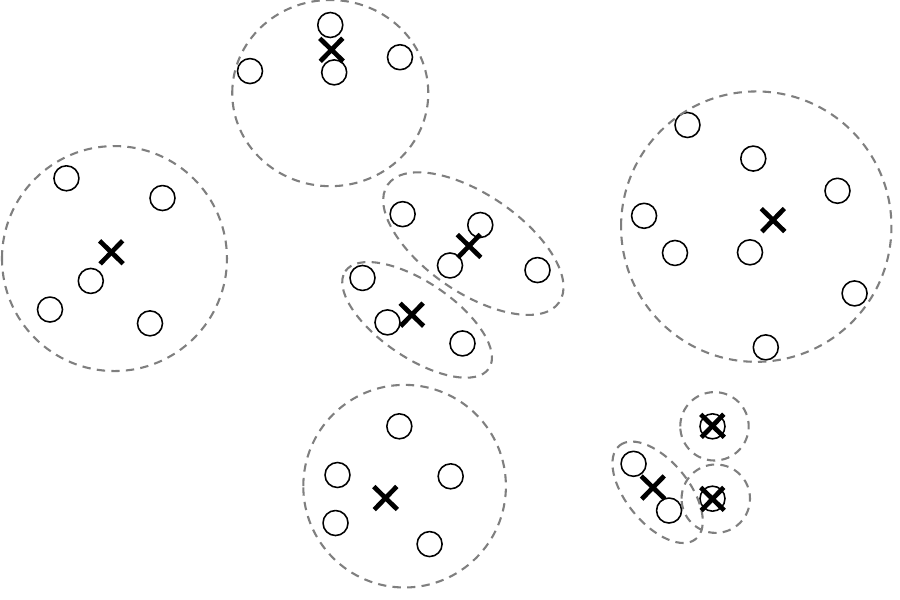} \label{fig:ex_decluster_3}
        }
    \end{center}
    \vspace{-0.5cm}
    \caption{Illustration of AID (Adopted from Park and Klabjan \cite{park15aid})}
    \label{fig:ex_decluster}  
\end{figure}

\section{AID for L1-Norm Error Fitting}
\label{section_AID_for_L1norm_fitting}

In this section, we develop the AID algorithm for \eqref{def_l1_fit}, a generalized version of the $L_1$-norm error fitting, with an assumption on mapping function $f$. We also assume that we are given an algorithm solving a weighted version of \eqref{def_l1_fit} optimally. Let us consider matrices $B \in \mathbb{R}^{n \times q}$ and $A \in \mathbb{R}^{n \times m}$, where the two matrices have the same number of rows ($n$) and different number of columns ($q$ and $m$, respectively). Without loss of generality, we assume that rows and columns represent entries and attributes, respectively. As presented in the introduction, we consider the $L_1$-norm error fitting problem defined as $$\min_{X \in \Phi} \| B - f(X,A) \|_1,$$ where $\Phi$ is an arbitrary feasible space for $X \in \mathbb{R}^{m \times q}$. We emphasize that $\Phi$ can be any hard constraint such as a non-convex constraint or integer requirement. We assume that $f$ satisfies 
\begin{equation}
\label{assumption1}
f(X, W A) = W f(X,A),
\end{equation}
where $W \in \mathbb{R}^{p \times n}$ is a weight matrix. In other words, function $f$ is associative. Hence, transforming $A$ by $W$ before passing to $f$ and transforming the return of $f$ by $W$ give the same outcome. A straightforward example satisfying \eqref{assumption1} is linear transformation because $(WA)X = W(AX)$ holds by the associative property of matrix multiplication.

For notational convenience, let $B_i \in \mathbb{R}^q$ be the $i^{\mbox{\scriptsize th}}$ row of $B$, $A_i \in \mathbb{R}^m$ be the $i^{\mbox{\scriptsize th}}$ row of $A$, and $f_i(X,A) \in \mathbb{R}^q$ be the $i^{\mbox{\scriptsize th}}$ row of $f(X,A)$. We use the following notation in the subsequent sections.
\begin{enumerate}[noitemsep]
\item[] $I = \{ 1,2,\cdots, n\}$: Index set of entries, where $n$ is the number of entries (observations)
\item[] $K^t = \{1,2,\cdots, |K^t| \}$: Index set of the clusters in iteration $t$
\item[] $C^t = \{C_1^t,C_2^t,\cdots,C_{|K^t|}^t\}$: Set of clusters in iteration $t$, where $C_k^t$ is a subset of $I$ for any $k$ in $K^t$
\item[] $T$: Last iteration of the algorithm when the optimality condition is satisfied
\item[] $B^t$: Aggregated data for $B$ based on $C^t$ in iteration $t$
\item[] $A^t$: Aggregated data for $A$ based on $C^t$ in iteration $t$
\end{enumerate}
We also define $B_k^t \in \mathbb{R}^{q}$, $A_k^t \in \mathbb{R}^{m}$, and $f_k^t \in \mathbb{R}^q$ for $k^{\mbox{\scriptsize th}}$ row of $B^t, A^t$, and $f_k(X,A^t)$, respectively. Given clusters $C^t$ for iteration $t$, aggregated data $B^t \in \mathbb{R}^{|K^t| \times q}$ and $A^t \in \mathbb{R}^{|K^t| \times m}$ are created by
\begin{center}
$B_k^t = \dfrac{\sum_{i \in C_k^t} B_i}{|C_k^t|} \in \mathbb{R}^{q}$ and $A_k^t = \dfrac{\sum_{i \in C_k^t} A_i}{|C_k^t|} \in \mathbb{R}^{m}$
\end{center}
for each $k \in K^t$. Note that $B_k^t$ and $A_k^t$ are the entry-wise average of vectors $B_i \in \mathbb{R}^q$ and $A_i \in \mathbb{R}^q$, respectively, for all $i \in C_k^t$. Note that one may use median, geometric mean, or another function to define aggregated data depending on the problem considered \cite{park15aid}. Similar to the work of Park and Klabjan \cite{park15aid}, we use the arithmetic mean because (\rmnum{1}) it is one of the most natural ways to aggregate data and (\rmnum{2}) it plays a key role in proving the optimality.

We first present a characteristic of $f$ under the assumption in \eqref{assumption1}.

\begin{lemma}
\label{lemma_agg_property}
For any $k \in K^t$, $f_k(X,A^t) = \frac{\sum_{i \in C_k^t} f_i(X,A)}{|C_k^t|}$ holds for any $X \in \Phi$.
\end{lemma}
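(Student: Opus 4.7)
The plan is to exhibit a specific weight matrix $W$ that realizes the aggregation of $A$ as $A^t$, and then invoke the associativity assumption \eqref{assumption1} to push $W$ through $f$, after which reading off the $k$-th row yields the claim. All of the work is in choosing $W$ correctly; the rest is a row-wise identification.

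Concretely, I would define $W \in \mathbb{R}^{|K^t| \times n}$ by
\begin{equation*}
W_{ki} = \begin{cases} 1/|C_k^t| & \text{if } i \in C_k^t, \\ 0 & \text{otherwise,} \end{cases}
\end{equation*}
for $k \in K^t$ and $i \in I$. Because $\{C_k^t\}_{k \in K^t}$ is a partition of $I$, this matrix has exactly one nonzero block of entries per row. A direct row-wise computation then gives $(WA)_k = \sum_{i \in C_k^t} \frac{1}{|C_k^t|} A_i = A_k^t$ for every $k$, i.e. $WA = A^t$. By the identical construction, $W$ also satisfies $(W M)_k = \frac{1}{|C_k^t|} \sum_{i \in C_k^t} M_i$ for any matrix $M$ with $n$ rows; this will be applied to $M = f(X,A)$.

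With this $W$ in hand, the assumption \eqref{assumption1} gives, for any $X \in \Phi$,
\begin{equation*}
f(X, A^t) = f(X, WA) = W f(X, A).
\end{equation*}
Extracting the $k$-th row of both sides and using the row-wise formula for $W \cdot$ noted above yields
\begin{equation*}
f_k(X, A^t) = \bigl(W f(X,A)\bigr)_k = \frac{\sum_{i \in C_k^t} f_i(X,A)}{|C_k^t|},
\end{equation*}
which is the statement of the lemma.

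The only non-mechanical step is guessing the correct $W$, and that guess is essentially forced by the definition of $A^t$: we need $WA = A^t$, and a partition into clusters dictates a block-average matrix. Once $W$ is on the table, assumption \eqref{assumption1} does the lifting, and there is no further obstacle — no convexity, continuity, or structural property of $f$ beyond \eqref{assumption1} is needed, which is why the lemma holds for arbitrary $X \in \Phi$.
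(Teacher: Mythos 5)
Your proof is correct and follows exactly the paper's argument: both define the block-average weight matrix $W$ with entries $1/|C_k^t|$ on each cluster, observe $WA = A^t$, apply assumption \eqref{assumption1} to get $f(X,A^t) = Wf(X,A)$, and read off the $k$-th row. Your writeup is simply a more explicit version of the same proof.
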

\begin{proof}
Note that we have $f_k(X,A_k^t) = f_k\Big(X,\dfrac{\sum_{i \in C_k^t} A_i}{|C_k^t|}\Big)$. Let $w_{ki}^t = \frac{1}{|C_k^t|}$ for all $i \in C_k^t$, $k \in K^t$ and 0 otherwise. Then, $A^t = W^t A$ and $B^t = W^t B$. We have $f(X,A^t) = f(X,W^t A) = W^t f(X,A)$, where the last equality holds by the assumption in \eqref{assumption1}. This implies 
$f_k(X,A^t) = \frac{\sum_{i \in C_k^t} f_i(X,A)}{|C_k^t|}$ for any $X \in \Phi$.
\end{proof}

The aggregated problem is a weighted version of \eqref{def_l1_fit} defined in the following.
\begin{equation} 
\label{def_agg}
F^t = \min_{X^t \in \Phi} \sum_{k \in K^t} |C_k^t| \| B_k^t - f_k(X^t,A^t) \|_1
\end{equation}

Let $\bar{X}^t$ be an optimal solution for \eqref{def_agg}. Then, $\bar{X}^t$ is a feasible solution to \eqref{def_l1_fit} because $\bar{X}^t$ is in the same dimension with $X$ of \eqref{def_l1_fit} and $\bar{X}^t \in \Phi$. Let 
\begin{equation}
\label{def_agg_obj_original}
E^t  = \| B - f(\bar{X}^t,A) \|_1
\end{equation}
be the objective function value of $\bar{X}^t$ for the original problem \eqref{def_l1_fit}. In the following lemma, the optimality condition, which determines whether $\bar{X}^t$ is an optimal solution for \eqref{def_l1_fit}, is presented.

\begin{lemma}
\label{lemma_opt_condition}
For all $k \in K^t$, if vector $B_i - f_i(\bar{X}^t, A) \in \mathbb{R}^q$ for all $i \in C_k^t$ have the same signs, then $\bar{X}^t$ is an optimal solution to \eqref{def_l1_fit}.
\end{lemma}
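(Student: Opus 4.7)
The plan is to establish the sandwich $E^* \le E^t = F^t \le E^*$, from which optimality of $\bar X^t$ follows immediately. The two key observations are (i) $F^t$ is always a lower bound on $E^*$, regardless of the sign condition, and (ii) under the stated sign condition, the aggregated objective value of $\bar X^t$ equals its original objective value $E^t$.

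First, I would use Lemma~\ref{lemma_agg_property} to rewrite the aggregated residual: for each $k \in K^t$ and any $X \in \Phi$,
\begin{equation*}
B_k^t - f_k(X, A^t) \;=\; \frac{1}{|C_k^t|}\sum_{i \in C_k^t}\bigl(B_i - f_i(X, A)\bigr),
\end{equation*}
so that
\begin{equation*}
|C_k^t|\,\|B_k^t - f_k(X, A^t)\|_1 \;=\; \Bigl\|\sum_{i \in C_k^t}\bigl(B_i - f_i(X, A)\bigr)\Bigr\|_1 \;\le\; \sum_{i \in C_k^t}\|B_i - f_i(X, A)\|_1
\end{equation*}
by the triangle inequality. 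Summing over $k$ gives $\sum_k |C_k^t|\,\|B_k^t - f_k(X, A^t)\|_1 \le \|B - f(X, A)\|_1$ for every $X \in \Phi$; minimizing over $\Phi$ yields $F^t \le E^*$.

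Next, I would invoke the hypothesis that, within each cluster $C_k^t$, the vectors $B_i - f_i(\bar X^t, A)$ are componentwise sign-consistent. For sign-consistent vectors the $L_1$-norm of the sum equals the sum of the $L_1$-norms, so the triangle inequality above becomes an equality when evaluated at $X = \bar X^t$:
\begin{equation*}
|C_k^t|\,\|B_k^t - f_k(\bar X^t, A^t)\|_1 \;=\; \sum_{i \in C_k^t}\|B_i - f_i(\bar X^t, A)\|_1.
\end{equation*}
Summing over $k \in K^t$ collapses the right-hand side to $\|B - f(\bar X^t, A)\|_1 = E^t$, and by optimality of $\bar X^t$ for the aggregated problem the left-hand side sums to $F^t$; hence $F^t = E^t$.

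Finally, since $\bar X^t \in \Phi$ is feasible for \eqref{def_l1_fit}, we have $E^t \ge E^*$. Combining the three inequalities gives $E^* \le E^t = F^t \le E^*$, so $E^t = E^*$ and $\bar X^t$ is an optimal solution to \eqref{def_l1_fit}. The only nontrivial step is recognizing that the sign condition is precisely what promotes the triangle inequality into an equality on each cluster; the rest is bookkeeping with Lemma~\ref{lemma_agg_property} and the definition of the aggregated problem.
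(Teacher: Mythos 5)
Your proof is correct and uses essentially the same ingredients as the paper's: Lemma~\ref{lemma_agg_property} plus the triangle inequality to show the aggregated objective lower-bounds the original one, and componentwise sign-consistency to make that inequality tight at $\bar X^t$. The paper merely packages this as a single chain $E^* \ge \sum_k |C_k^t|\|B_k^t - f_k(X^*,A^t)\|_1 \ge F^t = E^t$ rather than your three-part sandwich, so the arguments are the same up to bookkeeping.
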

\begin{proof}
Let $X^*$ be an optimal solution to \eqref{def_l1_fit} and $E^* = \| B - f(X^*,A)\|_1$ be the optimal objective function value of $X^*$. Then, we can show
\begin{longtable}{lllp{6cm}}
$E^*$ & $=$ & $\sum_{i \in I} \| B_i - f_i(X^*,A) \|_1$ &\\ [0.1cm]
	 & $=$ & $ \sum_{k \in K^t} \sum_{i \in C_k^t} \| B_i - f_i(X^*,A) \|_1$\\[0.1cm]
	 & $\geq$ & $\sum_{k \in K^t} \| \sum_{i \in C_k^t}  [ B_i - f_i(X^*,A)] \|_1$\\[0.1cm]
	 & $=$ & $\sum_{k \in K^t} \Big\| |C_k^t| \frac{\sum_{i \in C_k^t}  B_i}{|C_k^t|} - |C_k^t| \frac{\sum_{i \in C_k^t}  f_i(X^*,A)}{|C_k^t|}  \Big\|_1$\\[0.1cm]
	 & $=$ & $\sum_{k \in K^t} |C_k^t| \Big\|  B_k^t -  f_k(X^*,A^t)  \Big\|_1$\\[0.1cm]
	 & $\geq$ & $\sum_{k \in K^t} |C_k^t| \Big\|  B_k^t -  f_k(\bar{X}^t,A^t)  \Big\|_1$\\[0.1cm]
	 & $=$ & $\sum_{k \in K^t}  \Big\|  |C_k^t| B_k^t - |C_k^t| f_k(\bar{X}^t,A^t)  \Big\|_1$\\[0.1cm]
	 & $=$ & $\sum_{k \in K^t}  \Big\| \sum_{i \in C_k^t}  B_i - \sum_{i \in C_k^t}  f_i(\bar{X}^t,A)  \Big\|_1$\\[0.2cm]
	 & $=$ & $\sum_{k \in K^t} \sum_{i \in C_k^t}  \Big\|   B_i - f_i(\bar{X}^t,A)  \Big\|_1$\\[0.1cm]
	 & $=$ & $\| B - f(\bar{X}^t, A)\|_1$\\[0.1cm]
	 & $=$ & $E^t$,
\end{longtable} 
\noindent where the third line holds due to the property of absolute value function; the fifth line holds due to the definition of $B^t$ and $A^t$ and by Lemma \ref{lemma_agg_property}; the sixth line is true since $\bar{X}^t$ is an optimal solution to \eqref{def_agg}; the eighth line is true due to the definition of aggregated data and Lemma \ref{lemma_agg_property}; and the ninth line holds since all observations in $C_k^t$ have the same signs by the assumed condition. Since $\bar{X}^t$ is a feasible solution to \eqref{def_l1_fit} and has a smaller or equal objective function value than $X^*$, we conclude that $\bar{X}^t$ is an optimal solution to \eqref{def_l1_fit}.
\end{proof}

Note that Lemma \ref{lemma_opt_condition} explicitly derives a declustering criteria. If the signs of $B_i - f_i(\bar{X}^t, A) \in \mathbb{R}^q$ are not the same for all original entries in $C_k^t$, then the cluster can be declustered based on the sign patterns. If $q=1$, then the target matrix $B$ is actually a vector of the original entries. In this case, the declustering procedure naturally divides the cluster into two sub-clusters based on the sign ($-1$ or $1$) of each original entry, which is the case for the problems considered in Park and Klabjan \cite{park15aid}. However, if $q > 1$, then there can be $2^q$ patterns of the signs. A natural way of declustering is to create sub-clusters based on the sign patterns in the violating cluster. However, it can generate $2^q$ sub-clusters in the worst case. In this case, the number of aggregated entries can increase drastically over iterations. For example, when $q=3$, one cluster can be declustered into eight clusters in only one iteration in the worst case. When $|K^0| = (0.01)n$, the aggregated data can be equivalent to the original data in three iterations in the worst case. Hence, the benefit of using small-size aggregated data disappears quickly. To overcome this difficulty, we propose the declustering procedure in Algorithm \ref{algo_decluster}.

\begin{algorithm}[ht]
\caption{Decluster}        
\label{algo_decluster}                           
\begin{algorithmic}[1]    
\STATE \textbf{for} each cluster $k \in K^t$
\STATE \qquad \textbf{if} sign($B_i - f_i(\bar{X}^t, A)$) for all $i \in C_k^t$ are identical \textbf{then} keep the current cluster $C_k^t$
\STATE \qquad \textbf{else}
\STATE \qquad \qquad $s = $ mode$\{$ sign($B_i - f_i(\bar{X}^t, A)$) $|$ $i \in C_k^t \}$, $C_{k1}^{t} \gets \emptyset$, $C_{k2}^{t} \gets \emptyset$
\STATE \qquad \qquad \textbf{for} each $i \in C_k^t$
\STATE \qquad \qquad \quad \textbf{if} sign($B_i - f_i(\bar{X}^t, A)$)$=s$ \textbf{then} $C_{k1}^{t} = C_{k1}^{t} \cup \{i\}$
\STATE \qquad \qquad \quad \textbf{else} $C_{k2}^{t} = C_{k2}^{t} \cup \{i\}$
\STATE \qquad \qquad \textbf{end for}
\STATE \textbf{end for}
\end{algorithmic}
\end{algorithm}

After obtaining the partitioned clusters $C_{k1}^{t}$ and $C_{k2}^{t}$, the overall cluster indices must be appropriately updated. Note that the procedure works for any $q$, including $q=1$. In Line 2, if the cluster satisfies the optimality condition in Lemma \ref{lemma_opt_condition}, the current cluster remains the same. Otherwise, the most frequent sign pattern $s$ is obtained in Line 3, and cluster $k$ is declustered into two sub-clusters in Lines 5 - 8: Cluster $C_{k1}^t$ contains original entries with sign pattern $s$, and $C_{k2}^t$ contains all the remaining original entries in $C_k^t$.

The motivation behind Algorithm \ref{algo_decluster} is illustrated in Figure \ref{fg_decluster_motivation}. For all the plots in Figure \ref{fg_decluster_motivation}, the horizontal and vertical axes are labeled as Dim1 and Dim2, respectively, and circles represent the original entries. Consider a two-dimensional case with a cluster of 11 original entries in Figure \ref{fg_decluster_ex_a}. Hence, $q=2$ in this example. If we decluster based on the unique sign patterns, we will have four sub-clusters in the next iteration as illustrated in Figure \ref{fg_decluster_ex_b}, because there exist four unique sign patterns (1,1), (1,-1), (-1,1), and (-1,-1). To prevent the number of clusters from growing too fast, we can decluster the current clusters into two sub-clusters, regardless of the dimension of the data. If we assume that the signs remain very similar in the consecutive iterations of AID, the best way to keep the total number of clusters small is to partition the cluster into two groups. One can decluster by the sign of the first dimension, as illustrated in Figure \ref{fg_decluster_ex_c}, or the proposed algorithm can be used, as illustrated in Figure \ref{fg_decluster_ex_d}. In both cases, the number of clusters in the next iteration is two. However, if AID is close to the optimal and the signs do not change in iteration $t+1$, the proposed algorithm will have three clusters in iteration $t+1$ (Figure \ref{fg_decluster_ex_f}). In contrast, the other approach will need to decluster each of the two clusters in iteration $t+1$ and have four clusters in iteration $t+2$ (Figure \ref{fg_decluster_ex_e}). Observe that the difference in the number of clusters can be significant if we consider all clusters together for large $q$.

We next show the non-decreasing property of the objective function value $F^t$ in \eqref{def_agg} over iterations.

\begin{lemma}
\label{lemma_nondecreasing}
When \eqref{def_agg} is solved optimally in each iteration, $F^{t-1} \leq F^t$ holds.
\end{lemma}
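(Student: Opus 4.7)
The plan is to use the fact that clusters at iteration $t$ refine those at iteration $t-1$ (since Algorithm \ref{algo_decluster} only splits existing clusters), combined with the triangle inequality for the $L_1$ norm and the aggregation identity from Lemma \ref{lemma_agg_property}.

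First I would make the refinement explicit: for each $j \in K^{t-1}$ there is an index subset $K_j^t \subseteq K^t$ such that the family $\{C_k^t\}_{k \in K_j^t}$ partitions $C_j^{t-1}$. Let $\bar{X}^t$ denote an optimal solution to \eqref{def_agg} at iteration $t$. Since $\bar{X}^t \in \Phi$, it is also feasible for the aggregated problem at iteration $t-1$, so the first step is to bound $F^{t-1}$ above by the objective value of $\bar{X}^t$ in that problem.

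Next I would show term-by-term that, for every $j \in K^{t-1}$,
\begin{equation*}
|C_j^{t-1}|\,\bigl\| B_j^{t-1} - f_j(\bar{X}^t, A^{t-1}) \bigr\|_1 \;\leq\; \sum_{k \in K_j^t} |C_k^t|\,\bigl\| B_k^t - f_k(\bar{X}^t, A^t) \bigr\|_1.
\end{equation*}
The derivation uses Lemma \ref{lemma_agg_property} twice: it rewrites $|C_j^{t-1}|\bigl(B_j^{t-1} - f_j(\bar{X}^t, A^{t-1})\bigr)$ as $\sum_{i \in C_j^{t-1}} \bigl[B_i - f_i(\bar{X}^t, A)\bigr]$, then regroups this sum according to the finer partition $\{C_k^t\}_{k \in K_j^t}$ to recover $\sum_{k \in K_j^t} |C_k^t|\bigl(B_k^t - f_k(\bar{X}^t, A^t)\bigr)$. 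Applying the triangle inequality to this finite sum of vectors inside $\|\cdot\|_1$ yields the claimed inequality. Summing over $j \in K^{t-1}$ gives
\begin{equation*}
\sum_{j \in K^{t-1}} |C_j^{t-1}|\,\bigl\| B_j^{t-1} - f_j(\bar{X}^t, A^{t-1}) \bigr\|_1 \;\leq\; \sum_{k \in K^t} |C_k^t|\,\bigl\| B_k^t - f_k(\bar{X}^t, A^t) \bigr\|_1 \;=\; F^t,
\end{equation*}
and combining this with the feasibility bound $F^{t-1} \leq \sum_{j \in K^{t-1}} |C_j^{t-1}|\,\| B_j^{t-1} - f_j(\bar{X}^t, A^{t-1}) \|_1$ finishes the argument.

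I do not expect any serious obstacle. The only subtle point is to make certain that the refinement structure is stated cleanly and that the triangle inequality is applied to the aggregated signed vectors (not to absolute values after aggregation), so that Lemma \ref{lemma_agg_property} lines up the expressions exactly; this mirrors the chain of equalities already used in the proof of Lemma \ref{lemma_opt_condition}, with the triangle inequality playing the role of the third line there.
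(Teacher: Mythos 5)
Your proposal is correct and follows essentially the same route as the paper's proof: bound $F^{t-1}$ by the objective value of $\bar{X}^t$ in the iteration-$(t-1)$ aggregated problem, use Lemma \ref{lemma_agg_property} and the definition of the aggregated data to pass to sums of the signed vectors $B_i - f_i(\bar{X}^t,A)$, apply the triangle inequality across the refined partition, and reassemble into $F^t$. The only difference is cosmetic: the paper treats the case of a single cluster split into two and remarks that the general case is analogous, whereas you state the general refinement $\{C_k^t\}_{k\in K_j^t}$ of each $C_j^{t-1}$ explicitly.
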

\begin{proof}
For simplicity, let us assume that $\{ C_1^{t-1} \} = C^{t-1} \setminus C^{t}$, $\{ C_1^{t},C_2^{t} \} = C^{t} \setminus C^{t-1}$, and $C_1^{t-1} = C_1^{t} \cup C_2^{t}$. That is, $C_1^{t-1}$ is the only cluster in $C^{t-1}$ violating the optimality condition in Lemma \ref{lemma_opt_condition}, and $C_1^{t-1}$ is partitioned into $C_1^t$ and $C_2^t$ for iteration $t$. The cases such that more than one cluster of $C^{t-1}$ are declustered in iteration $t$ can be proved using the technique in this proof. We derive

\begin{longtable}{lll}
$F^{t-1}$ & $=$ & $|C_1^{t+1}| \| B_1^{t-1} - f_1(\bar{X}^{t-1}, A^{t-1})\|_1 + \sum_{k \in K^{t-1} \setminus \{1\}} |C_k^{t-1}| \| B_k^{t-1} - f_k(\bar{X}^{t-1}, A^{t-1})\|_1$\\[0.2cm]
  & $\leq$ & $|C_1^{t+1}| \| B_1^{t-1} - f_1(\bar{X}^{t}, A^{t-1})\|_1 + \sum_{k \in K^{t-1} \setminus \{1\}} |C_k^{t-1}| \| B_k^{t-1} - f_k(\bar{X}^{t}, A^{t-1})\|_1$\\[0.2cm]
  & $=$ & $ \| \sum_{i \in C_1^{t-1}} B_i - \sum_{i \in C_1^{t-1}} f_i(\bar{X}^{t}, A^{t-1})\|_1 + \sum_{k \in K^{t-1} \setminus \{1\}} |C_k^{t-1}| \| B_k^{t-1} - f_k(\bar{X}^{t}, A^{t-1})\|_1$\\[0.2cm]
  & $=$ & $ \| \sum_{i \in C_1^{t}} [ B_i - f_i(\bar{X}^{t}, A^{t-1})] + \sum_{i \in C_2^{t}} [ B_i - f_i(\bar{X}^{t}, A^{t-1})] \|_1$\\
  &     & $ + \sum_{k \in K^{t} \setminus \{1,2\}} |C_k^{t}| \| B_k^{t} - f_k(\bar{X}^{t}, A^{t})\|_1$\\[0.2cm]
  & $\leq$ & $ \| \sum_{i \in C_1^{t}} [ B_i - f_i(\bar{X}^{t}, A^{t-1})] \|_1 + \| \sum_{i \in C_2^{t}} [ B_i - f_i(\bar{X}^{t}, A^{t-1})] \|_1$\\
  &     & $ + \sum_{k \in K^{t} \setminus \{1,2\}} |C_k^{t}| \| B_k^{t} - f_k(\bar{X}^{t}, A^{t})\|_1$\\[0.2cm]
  & $=$ & $ |C_1^{t}| \Big\| \frac{\sum_{i \in C_1^{t}} B_i}{|C_1^{t}|}  - \frac{\sum_{i \in C_1^{t}} f_i(\bar{X}^{t}, A)}{|C_1^{t}|} \Big\|_1 + |C_2^{t}| \Big\| \frac{\sum_{i \in C_2^{t}} B_i}{|C_2^{t}|}  - \frac{\sum_{i \in C_2^{t}} f_i(\bar{X}^{t}, A)}{|C_2^{t}|} \Big\|_1$\\
  &     & $ + \sum_{k \in K^{t} \setminus \{1,2\}} |C_k^{t}| \| B_k^{t} - f_k(\bar{X}^{t}, A^{t})\|_1$\\[0.2cm]
  & $=$ & $ |C_1^{t}| \Big\| B_1^t  - f_1(\bar{X}^{t}, A^t) \Big\|_1 + |C_2^{t}| \Big\| B_2^t  - f_2(\bar{X}^{t}, A^t) \Big\|_1 + \sum_{k \in K^{t} \setminus \{1,2\}} |C_k^{t}| \| B_k^{t} - f_k(\bar{X}^{t}, A^{t})\|_1$\\[0.2cm]
  & $=$ & $F^t$,
\end{longtable}
\noindent where the second line holds since $\bar{X}^{t-1}$ is an optimal solution to \eqref{def_agg} with aggregated data $B^{t-1}$ and $A^{t-1}$; the third line holds by the definition of $B^{t-1}$ and by Lemma \ref{lemma_agg_property}; the fourth line holds since $C_1^{t-1} = C_1^t \cup C_2^t$ and the clusters in $K^{t-1}$ and $K^t$ are identical; the fifth line holds due to the property of $L_1$-norm; and the seventh line is true due to the definition of $B^t$ and $A^t$. This completes the proof.
\end{proof}

\begin{figure}[ht]
     \begin{center}
        \subfigure[A cluster in iteration $t$]{%
           \includegraphics[scale=0.38]{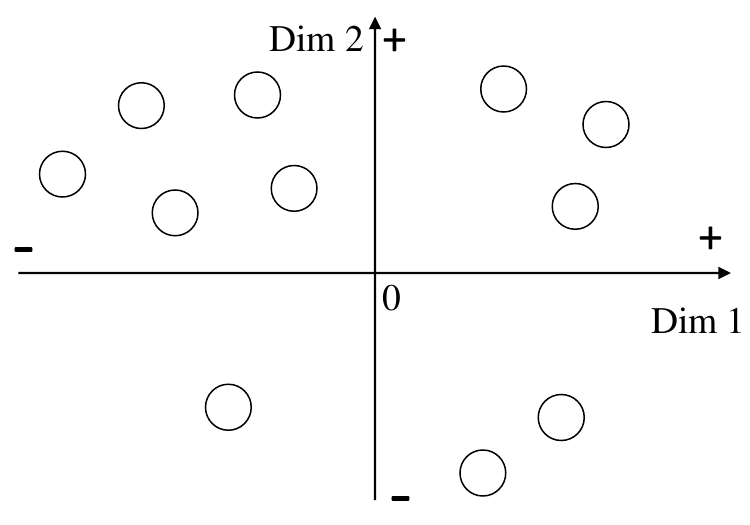} \label{fg_decluster_ex_a}
        }\quad
        \subfigure[Declustering by unique sign patterns: four clusters in iteration $t+1$]{%
           \includegraphics[scale=0.38]{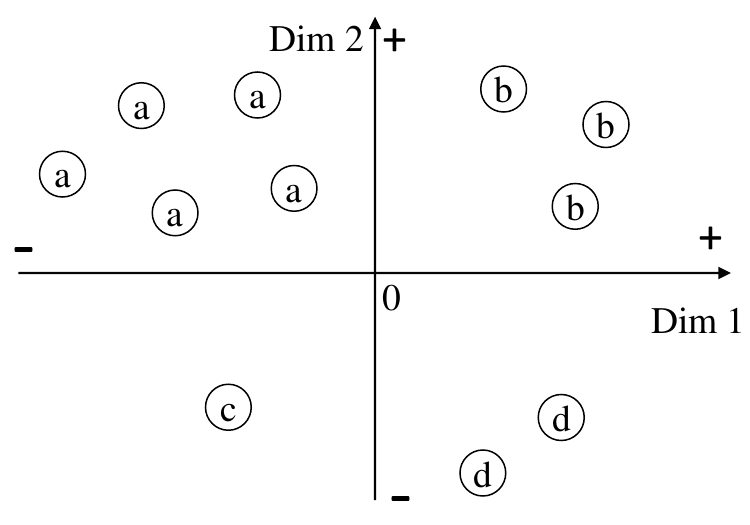} \label{fg_decluster_ex_b}
        }\quad
        \subfigure[Declustering by sign of Dimension 1: two clusters in iteration $t+1$]{%
           \includegraphics[scale=0.38]{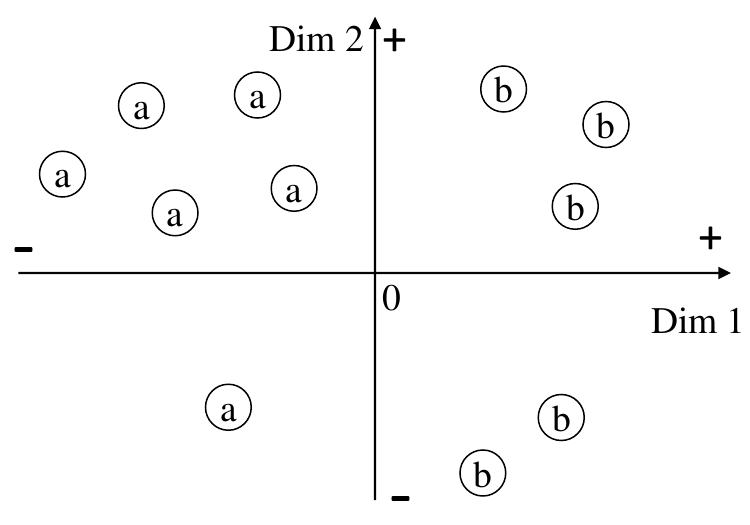} \label{fg_decluster_ex_c}
        }\quad
        \subfigure[Proposed declustering algorithm: two clusters in iteration $t+1$]{%
           \includegraphics[scale=0.38]{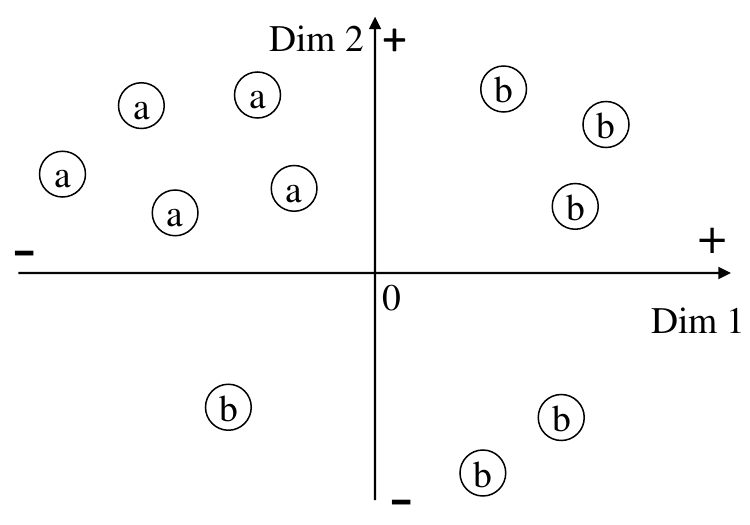} \label{fg_decluster_ex_d}
        }\quad
        \subfigure[Declustering by sign of Dimension 1: four clusters in iteration $t+2$]{%
           \includegraphics[scale=0.38]{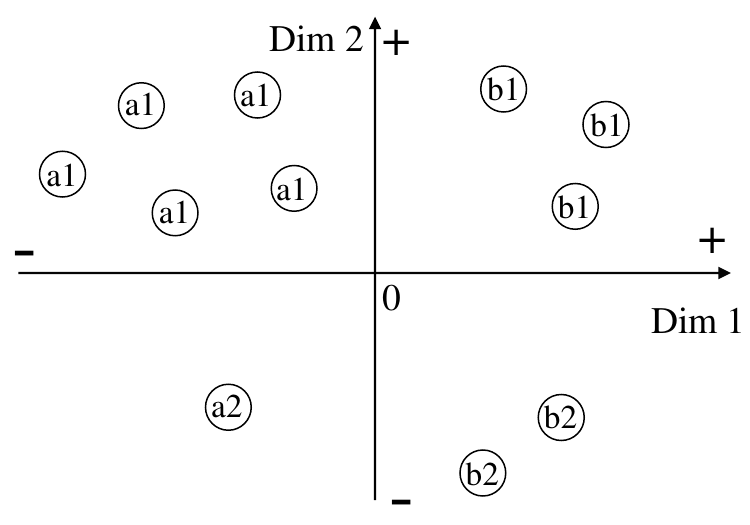} \label{fg_decluster_ex_e}
        }
        \quad
        \subfigure[Proposed declustering algorithm: three clusters in iteration $t+2$]{%
           \includegraphics[scale=0.38]{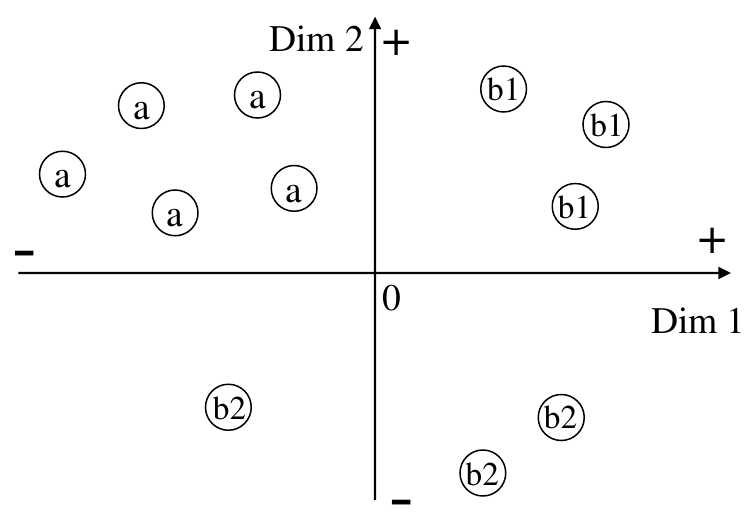} \label{fg_decluster_ex_f}
        }
    \end{center}
    \caption{Illustrative example for motivation of Algorithm \ref{algo_decluster}}
\label{fg_decluster_motivation}
\end{figure}

Let $E^t_{\mbox{\begin{scriptsize}best\end{scriptsize}}} = \min \{E^1,E^2,\cdots, E^t\}$ be the objective function value of the current best solution to the original problem in iteration $t$. Then, the optimality gap of AID in iteration $t$ can be defined as
\begin{equation}
\label{aid_opt_gap}
\delta_{\mbox{\begin{tiny}AID\end{tiny}}}^t = \frac{E^t_{\mbox{\begin{scriptsize}best\end{scriptsize}}}  - F^t}{E^t_{\mbox{\begin{scriptsize}best\end{scriptsize}}} }.
\end{equation}
The definition of $\delta_{\mbox{\begin{tiny}AID\end{tiny}}}^t$ follows the integer program gap of branch and bound algorithm defined as $|$best node $-$ best integer$|$ $/$ $|$best integer$|$ for popular commercial optimization softwares such as CPLEX and Gurobi. 

Combining Lemmas \ref{lemma_opt_condition} and \ref{lemma_nondecreasing}, it can be shown that $E^t_{\mbox{\begin{scriptsize}best\end{scriptsize}}}  - F^t$ is non-increasing in iteration $t$, which indicates that AID monotonically converges to a global optimum of the original problem.

\begin{proposition}
Given an algorithm that solves \eqref{def_agg} optimally, AID for \eqref{def_l1_fit} with $f$ satisfying \eqref{assumption1} monotonically converges to a global optimum of \eqref{def_l1_fit}.
\end{proposition}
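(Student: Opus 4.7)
The plan is to assemble the proposition from three observations already in hand: (i) for every iteration $t$, the aggregated optimal value $F^t$ is a valid lower bound on $E^*$; (ii) $E^t_{\mbox{\scriptsize best}}$ is, by construction, a valid upper bound on $E^*$ and is non-increasing in $t$; and (iii) $F^t$ is non-decreasing by Lemma \ref{lemma_nondecreasing}. Putting these together squeezes $E^*$ between a non-decreasing lower bound and a non-increasing upper bound, so the AID gap $\delta_{\mbox{\begin{tiny}AID\end{tiny}}}^t$ is monotonically non-increasing, which is exactly what monotonic convergence requires.

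First I would isolate the lower-bound claim $F^t \le E^*$ by inspecting the chain inside the proof of Lemma \ref{lemma_opt_condition}: the first five lines of that chain do not use the sign-identical hypothesis — only the triangle inequality, the definition of the aggregated data, Lemma \ref{lemma_agg_property}, and optimality of $\bar X^t$ for \eqref{def_agg}. Extracting those steps gives $E^* \ge F^t$ unconditionally. Feasibility of $\bar X^t$ in \eqref{def_l1_fit} immediately yields $E^t \ge E^*$, hence $E^t_{\mbox{\scriptsize best}} \ge E^*$. Combined with Lemma \ref{lemma_nondecreasing} and the tautological monotonicity of the running minimum $E^t_{\mbox{\scriptsize best}}$, the numerator $E^t_{\mbox{\scriptsize best}} - F^t$ in \eqref{aid_opt_gap} is non-negative and non-increasing, so $\delta_{\mbox{\begin{tiny}AID\end{tiny}}}^t$ is non-increasing. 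This establishes monotonicity.

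Next I would handle convergence to a \emph{global} optimum, not merely monotonic tightening of a gap. There are two termination routes. If at some iteration $t$ the optimality condition of Lemma \ref{lemma_opt_condition} is satisfied, then $\bar X^t$ is globally optimal and $E^t = E^* = F^t$, so $\delta_{\mbox{\begin{tiny}AID\end{tiny}}}^t = 0$. Otherwise, Algorithm \ref{algo_decluster} strictly refines at least one cluster, so $|K^{t+1}| \ge |K^t|+1$; since $|K^t| \le n$ and the declustering only stops when the optimality condition holds, the algorithm must terminate in at most $n - |K^0|$ iterations. At the worst-case terminal state every cluster is a singleton, in which case the sign of $B_i - f_i(\bar X^t,A)$ is trivially identical within each cluster, so Lemma \ref{lemma_opt_condition} fires and certifies global optimality.

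I expect the only subtle point to be route-checking the lower-bound inequality $F^t \le E^*$: one must confirm that none of the first five lines of the Lemma \ref{lemma_opt_condition} chain actually relied on the sign-identity hypothesis (they do not — that hypothesis is only invoked in the ninth line to collapse the outer $\|\cdot\|_1$ back into $\sum_i \|\cdot\|_1$). Once that is made explicit, everything else reduces to combining two monotonicities and a finiteness argument, and no further computation is needed.
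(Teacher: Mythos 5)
Your proposal is correct and follows essentially the same route the paper intends: the paper itself only sketches this proposition by saying that combining Lemma \ref{lemma_opt_condition} and Lemma \ref{lemma_nondecreasing} makes $E^t_{\mbox{\scriptsize best}} - F^t$ non-increasing, and your argument fills in exactly the pieces that sketch relies on (the unconditional lower bound $F^t \le E^*$ extracted from the first part of the chain in Lemma \ref{lemma_opt_condition}, feasibility giving $E^t_{\mbox{\scriptsize best}} \ge E^*$, and finite termination via strict cluster refinement down to singletons). The only cosmetic point is that the non-increase of $\delta_{\mbox{\begin{tiny}AID\end{tiny}}}^t$ is cleanest via $\delta^t = 1 - F^t/E^t_{\mbox{\scriptsize best}}$ rather than from the numerator alone, since a non-increasing numerator over a non-increasing denominator is not automatically non-increasing.
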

We emphasize here that an algorithm solving the aggregated problem \eqref{def_agg} optimally is a necessary tool to implement AID. In most cases, if the original problem can be solved optimally, then the weighted version of the original problem also can be solved optimally. However, there exist cases such that the weighted version cannot be solved optimally. A popular example is singular value decomposition. When weights are given to each element of the data matrix, it becomes not trivial to solve, and only approximation algorithms exist in the literature \cite{Nati03weightedlowrank}.

Finally, the worst-case performance of AID is analyzed. Let us assume that an $n$ by $m$ data matrix is given and an algorithm that solves the original problem with $n$ entries and $m$ features with time complexity $H(n,m)$ is used to solve the aggregated problems. Ignoring the initialization step in Line 1 of Algorithm \ref{algo_aid}, the steps of each AID iteration include the following computations: $O(mn)$ for creating aggregated data, $O\big(H(|K^t|, m)\big)$ for solving the aggregated problem, $O(mn)$ for checking the optimality condition, and $O(mn)$ for declustering the current clusters. Hence, each AID iteration takes $O\big(mn + H(|K^t|, m)\big)  = O\big(mn + H(n, m) \big)$ steps. Recall that, in the worst case, AID terminates with the original data, where each cluster is an original entry, and each iteration doubles the number of clusters. This means that the maximum number of iterations is $O \big( \log_2(\frac{n}{|K^0|}) \big)$. Therefore, the worst-case complexity of AID is $O\Big( \log_2(\frac{n}{|K^0|}) \cdot \big(mn + H(n, m)\big) \Big)$. Comparing with the original algorithm's complexity $H(n,m)$, it is easy to see that the worst-case performance of AID is not competitive. However, as shown through the computational experiments in Section \ref{section_comp_experiment}, AID is much faster than the theoretical worst-case complexity in practice because the aggregated problem can be solved quickly when the aggregation rate ($=\frac{|K^t|}{n}$) is small.

\section{L1-Norm Error Fitting Problems}
\label{sectoin_application}

In this section, we introduce several $L_1$-norm error fitting problems following the form of \eqref{def_l1_fit} with the assumption in \eqref{assumption1}. By introducing the problems, we show how AID can be implemented for a new $L_1$-norm error fitting problem using the result in Section \ref{section_AID_for_L1norm_fitting}. In this section, we focus on two types of problems. Problems in the first category are obtained by adding constraints to least absolute deviation regression, and problems in the second category commonly have orthogonality constraints.

\subsection{Least Absolute Deviation Regression and Extensions}

Least absolute deviation regression, often referred to as \textit{$L_1$ regression}, uses the absolute errors instead of the standard squared errors. In a matrix form, it can be written as
\begin{equation}
\label{def_l1_reg}
\min_{X} \| B - AX\|_1,
\end{equation}
where $A \in \mathbb{R}^{n \times m}$ is an independent variable matrix, $B \in \mathbb{R}^{n \times 1}$ is a dependent variable vector, $X \in \mathbb{R}^{m \times 1}$ is a regression coefficient vector. We also introduce index set of independent variables $J$. Let us define $f(X,A) = AX$ and $\Phi = \{ X \in \mathbb{R}^{m \times 1}\}$. Note that $f(X,WA) = WAX = W f(X,A)$ holds, and the assumption in \eqref{assumption1} is satisfied, because $f$ only performs matrix multiplication. Hence, \eqref{def_l1_reg} follows the form of \eqref{def_l1_fit}, and all the results in Section \ref{section_AID_for_L1norm_fitting} hold for \eqref{def_l1_reg}. In fact, Park and Klabjan \cite{park15aid} already developed AID for the $L_1$ regression. The computational experiment in Park and Klabjan \cite{park15aid} shows that AID outperforms for a very large volume of data with $n \geq 800,000$ and $m \geq 500$.

Let us next consider the subset selection problem for the $L_1$ regression, which can be written as
\begin{equation}
\label{def_l1_reg_subset_selection}
\min_{\|X\|_0 = p} \| B - AX\|_1,
\end{equation}
where $A \in \mathbb{R}^{n \times m}$ is the independent variable matrix, $B \in \mathbb{R}^{n \times 1}$ is the dependent variable vector, and $X \in \mathbb{R}^{m \times 1}$ is the regression coefficient vector. Note that everything remains the same from the setting in \eqref{def_l1_reg} except that we now have the cardinality constraint $\|X\|_0 = p$. With this constraint, only $p$ independent variables can be used to build a regression model. Recall that feasible space $\Phi$ in \eqref{def_l1_fit} does not have any restriction. Hence, with $\Phi = \{ X \in \mathbb{R}^{m \times 1} | \|X \|_0 = p \}$ and $f(X,A) = AX$, it can be shown that \eqref{def_l1_reg_subset_selection} is a special case of \eqref{def_l1_fit}, and AID in Section \ref{section_AID_for_L1norm_fitting} can be used.

The subset selection problem for multiple linear regression can be formulated as a mixed integer programming (MIP) problem by introducing binary variables $z_j$ for each independent variable $j \in J$. Depending on the objective function to be optimized, various MIP formulations have been proposed for mean square error \cite{ParkKlabjanMIPreg}, sum of square error \cite{Bertsimas2015,Shioda2009}, and sum of absolute error \cite{Bertsimas2015,Konno2009}. The model in \eqref{def_l1_reg_subset_selection} is used to minimize the sum of absolute errors, and a mixed integer programming model can be written as
\begin{subequations}
\label{def_mip_sae}
\begin{align}
\min \quad & \textstyle \sum_{i \in I} e_i^+ + e_i^- \label{def_mip_sae_a} \\ 
s.t.\quad& \textstyle e_i^+ - e_i^- = \sum_{j \in J} a_{ij} (x_j^+ - x_j^-) - b_i, & i \in I , \label{def_mip_sae_b} \\
& \textstyle x_j^+ + x_j^- \leq M z_j, & j \in J, \label{def_mip_sae_c}\\
& \textstyle \sum_{j \in J} z_j = p,& \label{def_mip_sae_d}\\
& \textstyle z \in \{0,1\}^{|J|}, e^+,e^-,x^+, x^- \geq 0, \label{def_mip_sae_e}
\end{align}
\end{subequations}
where $e_i^+$ and $e_i^-$ are positive and negative residuals for entry $i \in I$, $x_j^+$ and $x_j^-$ are positive and negative regression coefficients for explanatory variable $j \in J$, and $M$ is a large number. 

Another extension of $L_1$ regression \eqref{def_l1_reg} is to impose special requirement for the regression coefficients. Least squares over a sphere \cite{golub2012matrix} minimizes the sum of squared errors of the linear regression model, while the regression coefficients need to satisfy special requirement $\| X \|_2^2 \leq R$ for a user-given parameter $R$. The $L_1$-norm version of the least square over a sphere can be written as follows.
 \begin{equation}
\label{def_l1_reg_sphere}
\min_{\|X\|_2^2 \leq R} \| B - AX\|_1
\end{equation}
Note that \eqref{def_l1_reg_sphere} can be formulated with quadratically constrained linear programming (QCLP) by using the same variables, adding constraint $\|X\|_2^2 \leq R$, and removing \eqref{def_mip_sae_c} and \eqref{def_mip_sae_d}.
\begin{equation}
\label{qclp_l1_reg_sphere}
\begin{tabular}{lll}
$\min$ \quad & $\textstyle \sum_{i \in I} e_i^+ + e_i^-$\\[0.2cm]
$s.t.$\quad& $\textstyle e_i^+ - e_i^- = \sum_{j \in J} a_{ij} (x_j^+ - x_j^-) - b_i,$ & $i \in I$ ,\\[0.2cm]
& $\textstyle \sum_{j \in J} (x_j^+)^2 + (x_j^-)^2 \leq R$,& \\[0.2cm]
& $\textstyle e^+,e^-,x^+, x^- \geq 0$ 
\end{tabular}
\end{equation}

The last extension of $L_1$ regression \eqref{def_l1_reg} introduced in this section is $L_1$-norm best-fit hyperplane problem
\begin{equation}
\label{def_l1_bestfit_hyperplane}
\min_{V,\beta, \alpha_i, i \in I} \sum_{i \in I} \| A_i - (V \alpha_i + \beta) \|_1,
\end{equation}
where $V \in \mathbb{R}^{m \times m-1}$, $\beta \in \mathbb{R}^m$, and $\alpha_i \in \mathbb{R}^{m-1}$ for $i \in I$. Note that \eqref{def_l1_bestfit_hyperplane} has $\alpha_i$ for each observation $i \in I$ and \eqref{def_l1_bestfit_hyperplane} does not fall into the form of \eqref{def_l1_fit}. However, Brooks and Dul{\'a} \cite{brooks2013l1} proposed an algorithm to solve nonconvex optimization problem \eqref{def_l1_bestfit_hyperplane} optimally by solving $m$ $L_1$ linear regression problems. Hence, the $L_1$ regression problems appearing in the iterations of the algorithm can be solved by AID. The computational experiment in Brooks and Dul{\'a} \cite{brooks2013l1} considers $n = 10, 25, 50$, and $25,000$. If $n$ is large enough, then using AID can be beneficial.

In the computational experiment, we study the $L_1$ regression subset selection problem \eqref{def_l1_reg_subset_selection} and $L_1$ regression over a sphere \eqref{def_l1_reg_sphere}. Note that most solvers for MIP \eqref{def_mip_sae} and QCLP \eqref{def_l1_reg_sphere} can easily give a weight for each entry. Therefore, algorithms for solving the aggregated problems of \eqref{def_mip_sae} and \eqref{def_l1_reg_sphere} are given, and AID can be used without non-trivial modifications. In most MIP models in the literature, \eqref{def_mip_sae} is not easy to solve optimally because it is difficult to fathom or prone nodes in the branch and bound algorithm. As $n,m,$ and $p$ increase, it becomes more difficult to prove optimality, although the solution obtained after a few minutes may be an optimal solution or a close approximation \cite{ParkKlabjanMIPreg}. In the computational experiment, it is shown that MIP with the original data struggles with reducing the optimality gap, while AID can terminate in a faster time for large data.

\subsection{L1-Norm Error Fitting with Orthogonal Matrices}
\label{section_orthogonal}
Principal Component Analysis (PCA) constructs orthogonal vectors to explain the variance structure of the data. The solution to the standard PCA can be obtained by singular value decomposition or eigenvalue decomposition. While the standard PCA uses the $L_2$-norm for measuring the errors and deviations, $L_1$ PCA uses the $L_1$-norm. However, unlike the standard $L_2$ PCA, optimal solutions cannot be obtained by singular value decomposition or eigenvalue decomposition. There are two types of $L_1$ PCA problems. The first problem minimizes the reconstruction error. It is written as
\begin{equation}
\label{def_l1_pca}
\min_{X^{\top} X = I_p} \|A-AXX^{\top}\|_1,
\end{equation}
where $A \in \mathbb{R}^{n \times m}$ is the data matrix and $X \in \mathbb{R}^{m \times p}$ is the principal component matrix that is orthogonal. Let us define $B = A$, $f(X,A) = AXX^{\top}$, and $\Phi = \{ X \in \mathbb{R}^{m \times m} | X^{\top} X = I_p \}$. Note that $f(X,WA) = WAXX^{\top} = W f(X,A)$ holds. Further, since $\Phi$ in \eqref{def_l1_fit} can be any constraint, \eqref{def_l1_pca} is a special case of \eqref{def_l1_fit}, and AID in Section \ref{section_AID_for_L1norm_fitting} can be used. Recent research works \cite{Brooks201383,park16pca,Park2017kais} consider \eqref{def_l1_pca}, but no known algorithm in the literature can solve it optimally.

The second $L_1$ PCA problem maximizes the projected deviation, and it is written as 
\begin{equation}
\label{def_l1_pca_max}
\max_{X^{\top} X = I_p} \|AX\|_1.
\end{equation}
There exist several research works for \eqref{def_l1_pca_max} \cite{Ke-Kanade:05,Kwak:08,Markopoulos,markopoulos2017efficient,mccoy2011,Nie-etal:11,parkThesis2015}, yet the only known algorithm solving \eqref{def_l1_pca_max} optimally is the algorithm by Markopoulos et al. \cite{Markopoulos}. The exact algorithm, however, is not capable of solving large size problems. Note that \eqref{def_l1_pca_max} does not follow the exact form of \eqref{def_l1_fit} or other problems discussed in this section minimizing the fitting errors. Hence, with the current version of AID, it is not trivial to show optimality of AID for \eqref{def_l1_pca_max}. In detail, in the proof of Lemma \ref{lemma_opt_condition}, the core steps of the derivations rely on the property of the absolute value function, optimality of the aggregation problems, and the definition of the aggregate data. The proof of Lemma \ref{lemma_opt_condition} basically shows $E^* \leq F^* \leq F^t = E^t$. While most steps of the proof can trivially be modified for $L_1$-Norm maximization problems, which need the inequalities in the opposite directions, the third line $\sum_{k \in K^t} \sum_{i \in C_k^t} \| B_i - f_i(X^*,A) \|_1 \geq \sum_{k \in K^t} \| \sum_{i \in C_k^t}  [ B_i - f_i(X^*,A)] \|_1$ cannot have the opposite direction for the inequality and this will give $E^* \leq F^* \geq F^t = E^t$. Hence, using the current proof technique, it is not trivial to show the optimality of AID for L1 norm maximization problems such as \eqref{def_l1_pca_max}. Hence, we propose AID to quickly obtain high quality solutions without guaranteeing theoretical optimality. While theoretical optimality is not guaranteed, the computational experiment in Section \ref{section_comp_experiment} demonstrates promising performance.

In Figure \ref{fg_pca_ex}, we illustrate AID for \eqref{def_l1_pca_max}. In Figure \ref{fg_pca_ex1}, the small circles and crosses represent the original and aggregated entries, respectively, and the large dotted circles are the clusters associated with the aggregated entries. All the entries are plotted in the projected dimension with two principal components obtained in iteration $t$. The horizontal and vertical axes are for the first and second principal components, respectively. In Figure \ref{fg_pca_ex2}, two clusters have original entries with different sign patterns and violate the optimality condition. The left violating cluster is declustered into two sub-clusters based on the unique sign patterns (-1,1) and (-1,-1). However, the right violating cluster has four unique sign patterns. By Algorithm \ref{algo_decluster}, the first sub-cluster contains the original entries with sign pattern (1,-1), as the most frequent sign pattern is (1,-1) in the current cluster, and the second sub-cluster contains the others. In Figure \ref{fg_pca_ex3}, new clusters and aggregated data are presented.

\begin{figure}[ht]
     \begin{center}
        \subfigure[Clusters $C^t$]{%
           \includegraphics[scale=0.32]{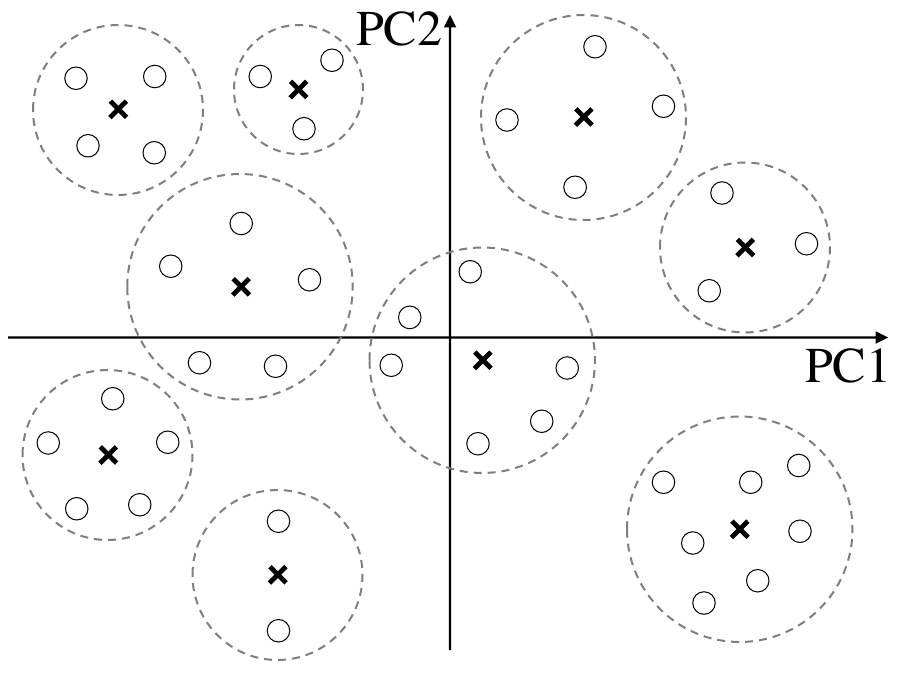} \label{fg_pca_ex1}
        }\qquad
        \subfigure[Declustered]{%
           \includegraphics[scale=0.32]{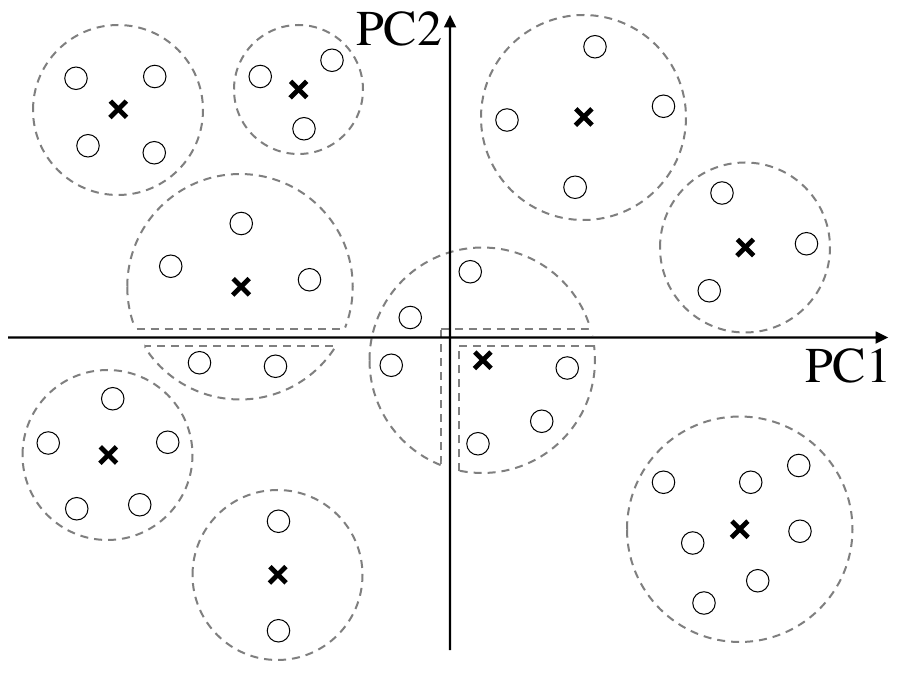} \label{fg_pca_ex2}
        }\qquad
        \subfigure[New clusters $C^{t+1}$]{%
           \includegraphics[scale=0.32]{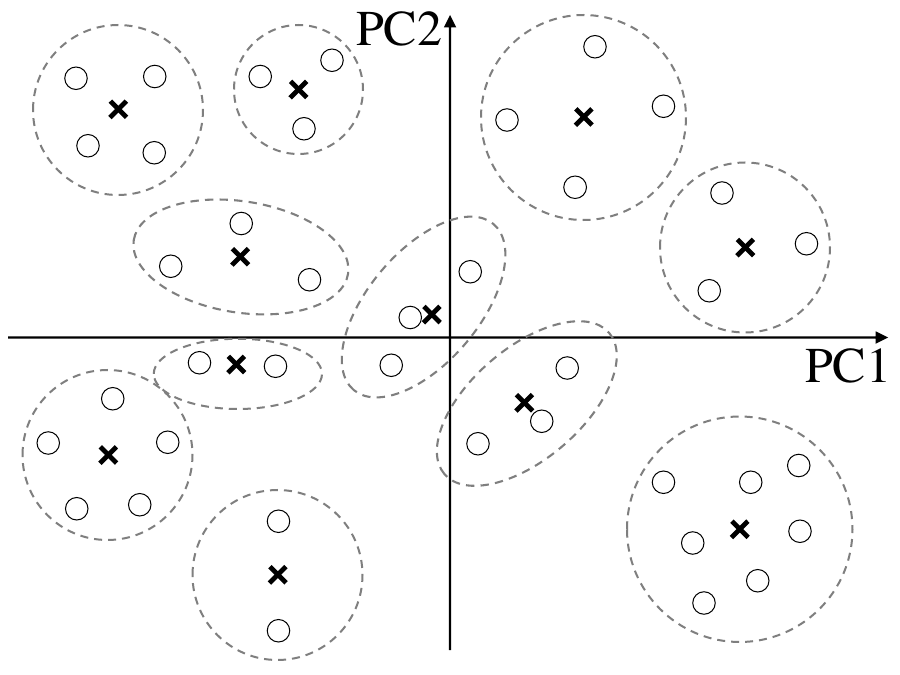} \label{fg_pca_ex3}
        }
    \end{center}
    \vspace{-0.5cm}
    \caption{Illustration of AID for $L_1$ PCA \eqref{def_l1_pca_max}}
\label{fg_pca_ex}
\end{figure}

Another problem that has the orthogonality constraint is the orthogonal Procrustes problem, which is written as 
\begin{equation}
\label{def_procrustes}
\min_{X^{\top} X = I_p} \|B - AX\|_1,
\end{equation}
where $B \in \mathbb{R}^{n \times m}$ and $A \in \mathbb{R}^{n \times m}$ are dependent and independent data matrices, and $X \in \mathbb{R}^{m \times m}$ is an orthogonal transition matrix. Let us define $f(X,A) = AX$ and $\Phi = \{ X \in \mathbb{R}^{m \times m} | X^{\top} X = I_p \}$. Then, it is easy to see that \eqref{def_procrustes} follows the form of \eqref{def_l1_fit}, and all the results in Section \ref{section_AID_for_L1norm_fitting} hold. A few works \cite{Trendafilov20031177,Trendafilov2004} propose approximation algorithms for \eqref{def_procrustes}, yet no known algorithm guaranteeing optimality exists.

Note that the results in Section \ref{section_AID_for_L1norm_fitting} state that, given an algorithm solving the weighted version \eqref{def_agg} optimally, AID solves the original problem \eqref{def_l1_fit} optimally. However, out of the three problems, only \eqref{def_l1_pca_max} has a known algorithm solving the problem optimally. Markopoulos et al. \cite{Markopoulos} proposed an algorithm based on an enumeration of possible binary sign choices over original entries. However, the drawback is that the algorithm's running time is $O(n^{mp-p+1})$. Another issue is the availability of a solver for the weighted version. The algorithm in Markopoulos et al. \cite{Markopoulos} cannot explicitly give weights to the errors of the entries.

In the rest of this section, we present an approach to solve the aggregated problem of \eqref{def_l1_pca_max}. The proposed algorithm can solve the aggregated problem optimally. Note that the aggregated problem for \eqref{def_l1_pca_max} is defined as
\begin{equation}
\label{def_l1pcamax_agg}
\max_{X^{\top} X = I_p} \sum_{k \in K^t} |C_k^t| \| A_k^t X\|_1.
\end{equation}
Since we are given an algorithm that only solves an unweighted version of the problem, we will convert \eqref{def_l1pcamax_agg} into an unweighted version with the equivalent solution under some conditions. Let $\bar{A}^t \in \mathbb{R}^{|K^t| \times m}$ be the weighted aggregated data matrix, with each element defined as
\begin{center}
$D_{kj}^t = |C_k^t| A_{kj}^t$ for $k \in K^t$, $j \in J$.
\end{center}
Then, the modified aggregated problem, an unweighted version of \eqref{def_l1pcamax_agg}, can be written as 
\begin{equation}
\label{def_l1pcamax_agg_unweighted}
\max_{X^{\top} X = I_p} \sum_{k \in K^t} \| D_k^t X\|_1,
\end{equation}
and an optimal solution to \eqref{def_l1pcamax_agg_unweighted} is an optimal solution to \eqref{def_l1pcamax_agg}. Note that \eqref{def_l1pcamax_agg_unweighted} can be solved by the algorithm in Markopoulos et al. \cite{Markopoulos} with the weighted aggregated data $\bar{A}^t$. Therefore, the aggregated problem \eqref{def_l1pcamax_agg} can be solved optimally by the algorithm of Markopoulos et al. \cite{Markopoulos}. However, as stated earlier in this section, AID for \eqref{def_l1_pca_max} does not guarantee theoretical optimality.

\section{Computational Experiment}
\label{section_comp_experiment}

We test the performance of AID by solving three problems from Section \ref{sectoin_application}: subset selection for least absolute deviation regression \eqref{def_l1_reg_subset_selection}, $L_1$ regression over a sphere \eqref{def_l1_reg_sphere}, and $L_1$ PCA maximizing the projected deviation \eqref{def_l1_pca_max}. We compare the performance of AID with benchmark algorithms in the literature. For the computational experiment of \eqref{def_l1_reg_subset_selection} and \eqref{def_l1_reg_sphere}, a server with two Xeon 2.70 GHz CPUs and 24 GB RAM is used. For the computational experiment of \eqref{def_l1_pca_max}, a personal computer with 8 GB RAM and Intel Core i7 (2.40 GHz dual core) is used. The AID algorithms implemented, synthetic data generation codes, and the SMU DataArts data are available in the online supplement. For all problems, let $E_{\mbox{\begin{scriptsize}bench\end{scriptsize}}}$ and $E_{\mbox{\begin{scriptsize}AID\end{scriptsize}}}$ be the objective function values of the benchmark algorithm and AID, respectively. To evaluate the performance of AID and compare against the benchmark, we use the following performance measures.
\begin{enumerate}[noitemsep]
\item[] Time: Execution time of the benchmark algorithm or AID \vspace{0.2cm}
\item[] $T$: Number of iteration of AID \vspace{0.2cm}
\item[] $\displaystyle r_{\mbox{\begin{scriptsize}agg\end{scriptsize}}} = \frac{|K^T|}{n}$: Aggregation rate at the termination of AID \vspace{0.2cm}
\item[] $\rho$: Execution time of AID divided by the execution time of the benchmark \vspace{0.2cm}
\item[] $\displaystyle \Delta$: Relative error of AID compared against the benchmark 
\end{enumerate}
Note that the execution times are defined for both algorithms, whereas $T$ and $r_{\mbox{\begin{scriptsize}agg\end{scriptsize}}}$ are defined only for AID. Also, $\rho$ and $\Delta$ are defined to compare the execution times and relative errors in the objective function. If $\rho < 1$, then AID is faster than the benchmark. For \eqref{def_l1_reg_subset_selection}, $\Delta$ is defined as $\Delta = \frac{|E_{\mbox{\begin{scriptsize}bench\end{scriptsize}}} - E_{\mbox{\begin{scriptsize}AID\end{scriptsize}}}|}{\min\{E_{\mbox{\begin{scriptsize}bench\end{scriptsize}}},E_{\mbox{\begin{scriptsize}AID\end{scriptsize}}}\}}$ because there are cases such that the benchmark cannot provide optimal solutions within a specified time limit. For \eqref{def_l1_reg_sphere} and \eqref{def_l1_pca_max}, $\Delta$ is defined as $\Delta = \frac{|E_{\mbox{\begin{scriptsize}bench\end{scriptsize}}} - E_{\mbox{\begin{scriptsize}AID\end{scriptsize}}}|}{E_{\mbox{\begin{scriptsize}bench\end{scriptsize}}}}$. Remark that $\Delta$ represents the gap between the benchmark and AID due to the scalability of the benchmark algorithms. However, for \eqref{def_l1_pca_max}, the non-optimality of AID also contributes $\Delta$ because no theoretical optimality is guaranteed for the max version of $L_1$ PCA.

\subsection{Subset Selection for L1 Regression}
\label{subsection_exp_reg_mip}

We implement the MIP model for \eqref{def_l1_reg_subset_selection} and AID for least absolute deviation regression subset selection in C\# with CPLEX. Let \textsf{MIP} and \textsf{AID} denote the MIP model for \eqref{def_l1_reg_subset_selection} and the AID algorithm. Each algorithm is given a one hour time limit. To set up the $M$ value in \eqref{def_mip_sae_c}, we use the sampling approach of Park and Klabjan \cite{ParkKlabjanMIPreg}. We build 30 regression models by randomly picking $p$ independent variables for each model. Then, the $M$ value is defined by $M = \mu + 2 \sigma$, where $\mu$ and $\sigma$ are the average and standard deviation of the coefficients obtained from the 30 regression models. Although there is a small chance the obtained $M$ may not be valid, both MIP models use the same value of $M$, and we assume this value does not affect the performance comparison. The computation time of $M$ calculation took approximately 100 seconds. For the CPLEX tolerance, we use the default tolerance for \eqref{def_l1_reg_subset_selection} and the aggregated problems of AID: relative MIP gap tolerance = $10^{-4}$ and absolute MIP gap tolerance = $10^{-6}$. For the tolerance of AID (\textit{tol} in Algorithm \ref{algo_aid}), we use $10^{-4}$, which forces AID to terminate only if \eqref{aid_opt_gap} is smaller than $10^{-4}$.

The initial clusters for AID are generated by the following procedure. We first build five regression models with $p$ independent variables, where the variables are selected randomly. Note that we have five residuals for each original entry. With the $n$ by 5 residual matrix, one iteration of Lloyd's algorithm for K-means clustering is used to cluster the original entries into $m$ clusters. The residuals of the regression models are used for the clustering, as we assume that the original entries are similar if the residuals from different models are similar. Since the run time of Lloyd's algorithm depends on the number of elements of the input matrix and the number of iterations, the proposed procedure is faster than the standard Lloyd's algorithm with the original data.

We test the algorithms with synthetic and real data. First, to systematically check the performance changes across various data sizes, we use randomly generated regression instances for the experiment. To generate random instances, we use the \textsf{make\_regression} function of the Python package Scikit Learn \cite{scikitlearn} with noise option equal to 1. The instances are generated from triplet $(n,m,p)$, where $n \in \{5000,10000,20000,40000\}$ is the number of original entries (data points), $m \in \{20,30,40,50\}$ is the number of independent variables (features), and $p \in \{5,10,15\}$ is the number of features in the randomly generated regression model. For each triplet $(n,m,p)$, 10 instances are generated, which yields a total of 480 instances. The synthetic data generation code is available in the online supplement. Second, a real data set from the database of DataArts at Southern Methodist University (SMU DataArts) is used. The task is to build a regression model to predict the number of attendances of 6,466 art organizations in the U.S. based on 20 independent variables. The independent variables include statistics of the neighborhood of each organization (population, commuting time, income, household size, education level, employment rate, age, art activities, competitions, etc.) and the organization information (revenue, total expense, marketing expense, grants, etc.). The LAD regression is useful for this data set because the response variable (number of attendance) contains outliers. The SMU DataArts data set is available in the online supplement.

Both algorithms terminate if optimality is achieved or one hour is elapsed. Since any of the algorithms can terminate without an optimal solution within the one-hour time limit, we check additional performance measures in the following.
\begin{enumerate}[noitemsep]
\item[] \textit{Gap}: Optimality gap at the termination. 
\item[] \textit{Opt}: Percentage such that optimality is guaranteed (Gap =0\% or smaller than the tolerance)
\end{enumerate}
We obtain \textit{Gap} from the relative MIP gap of CPLEX for \eqref{def_l1_reg_subset_selection} and $\delta_{\mbox{\begin{tiny}AID\end{tiny}}}^T$ for AID.

\begin{table}[htbp]
  \centering
\begin{scriptsize}
\setlength{\tabcolsep}{2pt}
    \begin{tabular}{|rrr|rrr|rrrr|rr|}
    \hline
\multicolumn{3}{|c|}{Parameters} &   \multicolumn{3}{c|}{MIP} & \multicolumn{4}{c|}{AID} &  &   \\ \hline
    $n$  & $m$     & $p$      & Gap   & Time & Opt & Time & $T$ & $r_{\mbox{\begin{scriptsize}agg\end{scriptsize}}}$&  $K^T$ &  $\rho$ & $\Delta$  \\ \hline
                 5,000  & 20    & 5     & 0\%   & 15    & 1     & 11    & 8.9   & 0.11  & 554   & 0.78  & 0\% \\
                 5,000  & 20    & 10    & 0\%   & 53    & 1     & 29    & 9.9   & 0.24  & 1,199 & 0.63  & 0\% \\
                 5,000  & 20    & 15    & 0\%   & 64    & 1     & 48    & 10    & 0.31  & 1,546 & 0.80  & 0\% \\
                 5,000  & 30    & 5     & 0\%   & 25    & 1     & 15    & 8.1   & 0.13  & 648   & 0.70  & 0\% \\
                 5,000  & 30    & 10    & 0\%   & 71    & 1     & 37    & 9.1   & 0.25  & 1,237 & 0.56  & 0\% \\
                 5,000  & 30    & 15    & 0\%   & 125   & 1     & 68    & 9.4   & 0.29  & 1,474 & 0.56  & 0\% \\
                 5,000  & 40    & 5     & 0\%   & 29    & 1     & 19    & 7.1   & 0.12  & 587   & 0.67  & 0\% \\
                 5,000  & 40    & 10    & 0\%   & 65    & 1     & 48    & 9     & 0.25  & 1,248 & 0.81  & 0\% \\
                 5,000  & 40    & 15    & 0\%   & 167   & 1     & 93    & 9.2   & 0.29  & 1,475 & 0.82  & 0\% \\
                 5,000  & 50    & 5     & 0\%   & 42    & 1     & 27    & 7.1   & 0.14  & 717   & 0.65  & 0\% \\
                 5,000  & 50    & 10    & 0\%   & 90    & 1     & 70    & 8.3   & 0.29  & 1,436 & 0.89  & 0\% \\
                 5,000  & 50    & 15    & 0\%   & 187   & 1     & 135   & 8.9   & 0.30  & 1,505 & 0.89  & 0\% \\ \hline
               10,000  & 20    & 5     & 0\%   & 31    & 1     & 17    & 9     & 0.07  & 712   & 0.56  & 0\% \\
               10,000  & 20    & 10    & 0\%   & 151   & 1     & 50    & 10.2  & 0.20  & 1,968 & 0.38  & 0\% \\
               10,000  & 20    & 15    & 0\%   & 167   & 1     & 84    & 10.5  & 0.25  & 2,482 & 0.54  & 0\% \\
               10,000  & 30    & 5     & 0\%   & 45    & 1     & 29    & 8.7   & 0.10  & 1,031 & 0.64  & 0\% \\
               10,000  & 30    & 10    & 0\%   & 155   & 1     & 64    & 9.8   & 0.19  & 1,932 & 0.60  & 0\% \\
               10,000  & 30    & 15    & 0\%   & 350   & 1     & 133   & 10    & 0.24  & 2,442 & 0.43  & 0\% \\
               10,000  & 40    & 5     & 0\%   & 65    & 1     & 28    & 7.4   & 0.06  & 608   & 0.43  & 0\% \\
               10,000  & 40    & 10    & 0\%   & 213   & 1     & 96    & 9.1   & 0.22  & 2,175 & 0.52  & 0\% \\
               10,000  & 40    & 15    & 0\%   & 524   & 1     & 184   & 10    & 0.25  & 2,492 & 0.39  & 0\% \\
               10,000  & 50    & 5     & 0\%   & 104   & 1     & 44    & 7.6   & 0.09  & 935   & 0.43  & 0\% \\
               10,000  & 50    & 10    & 0\%   & 312   & 1     & 148   & 9     & 0.23  & 2,337 & 0.55  & 0\% \\
               10,000  & 50    & 15    & 0\%   & 724   & 1     & 244   & 9.1   & 0.25  & 2,484 & 0.35  & 0\% \\ \hline
    \end{tabular} \hspace{0.1cm}
        \begin{tabular}{|rrr|rrr|rrrr|rr|}
    \hline
\multicolumn{3}{|c|}{Parameters} &   \multicolumn{3}{c|}{MIP} & \multicolumn{4}{c|}{AID} &  &   \\ \hline
    $n$  & $m$     & $p$      & Gap   & Time & Opt & Time & $T$ & $r_{\mbox{\begin{scriptsize}agg\end{scriptsize}}}$&  $K^T$ &  $\rho$ & $\Delta$  \\ \hline
          20,000  & 20    & 5     & 0\%   & 72    & 1     & 30    & 9.3   & 0.05  & 939   & 0.42  & 0\% \\
          20,000  & 20    & 10    & 0\%   & 502   & 1     & 104   & 11    & 0.17  & 3,419 & 0.22  & 0\% \\
          20,000  & 20    & 15    & 0\%   & 738   & 1     & 167   & 11.1  & 0.22  & 4,368 & 0.28  & 0\% \\
          20,000  & 30    & 5     & 0\%   & 148   & 1     & 51    & 9.3   & 0.08  & 1,671 & 0.36  & 0\% \\
          20,000  & 30    & 10    & 0\%   & 734   & 1     & 138   & 10.5  & 0.18  & 3,527 & 0.26  & 0\% \\
          20,000  & 30    & 15    & 0\%   & 1,358 & 1     & 347   & 11    & 0.22  & 4,436 & 0.27  & 0\% \\
          20,000  & 40    & 5     & 0\%   & 216   & 1     & 60    & 8.4   & 0.06  & 1,281 & 0.31  & 0\% \\
          20,000  & 40    & 10    & 0\%   & 840   & 1     & 205   & 10    & 0.19  & 3,770 & 0.36  & 0\% \\
          20,000  & 40    & 15    & 0\%   & 1,902 & 0.9   & 490   & 10.4  & 0.23  & 4,569 & 0.30  & 0\% \\
          20,000  & 50    & 5     & 0\%   & 266   & 1     & 69    & 8     & 0.06  & 1,273 & 0.27  & 0\% \\
          20,000  & 50    & 10    & 0\%   & 882   & 1     & 336   & 10    & 0.21  & 4,206 & 0.41  & 0\% \\
          20,000  & 50    & 15    & 34\%  & 2,529 & 0.6   & 547   & 10    & 0.23  & 4,605 & 0.33  & 100\% \\ \hline
          40,000  & 20    & 5     & 0\%   & 447   & 1     & 61    & 10    & 0.03  & 1,334 & 0.19  & 0\% \\
          40,000  & 20    & 10    & 7\%   & 2,441 & 0.9   & 229   & 11.9  & 0.13  & 5,149 & 0.11  & 24\% \\
          40,000  & 20    & 15    & 56\%  & 3,215 & 0.4   & 443   & 12    & 0.18  & 7,298 & 0.14  & 100\% \\
          40,000  & 30    & 5     & 0\%   & 386   & 1     & 87    & 9.2   & 0.04  & 1,509 & 0.23  & 0\% \\
          40,000  & 30    & 10    & 50\%  & 3,043 & 0.4   & 305   & 11    & 0.14  & 5,623 & 0.13  & 100\% \\
          40,000  & 30    & 15    & 90\%  & 3,550 & 0.1   & 835   & 11.3  & 0.18  & 7,358 & 0.24  & 100\% \\
          40,000  & 40    & 5     & 0\%   & 932   & 1     & 115   & 8.9   & 0.05  & 1,860 & 0.18  & 0\% \\
          40,000  & 40    & 10    & 41\%  & 2,636 & 0.5   & 467   & 11    & 0.15  & 6,199 & 0.21  & 100\% \\
          40,000  & 40    & 15    & 97\%  & 3,602 & 0     & 1,026 & 11    & 0.17  & 6,982 & 0.28  & 100\% \\
          40,000  & 50    & 5     & 0\%   & 964   & 1     & 133   & 7.9   & 0.04  & 1,506 & 0.17  & 0\% \\
          40,000  & 50    & 10    & 18\%  & 2,666 & 0.6   & 742   & 10.7  & 0.17  & 6,741 & 0.33  & 100\%\\
          40,000  & 50    & 15    & 94\%  & 3,602 & 0     & 1,457 & 11    & 0.20  & 8,061 & 0.40  & 100\% \\ \hline
    \end{tabular}%
\end{scriptsize}

\caption{Results for LAD regression subset selection aggregated by $n,m$, and $p$}
  \label{table_exp_reg}%
\end{table}%

We first present the results for the synthetic data sets. In Table \ref{table_exp_reg}, the aggregated result is presented in the two-column table where each row is the average across the 10 random instances for the corresponding triplet $(n,m,p)$. The first three columns are for the instance parameters, where $p$ is also used for the input parameter for the cardinality constraint. The GAP values of AID are zero percent across all instances and are omitted in this table.

The Gap of MIP is zero percent when the data size is small ($n \leq 10000$), but it increases as $n$ and $m$ increase; the average gap is 37.6\% for the instances with $n = 40,000$. In contrast, the Gap of AID is zero percent across all instances. Similarly, AID solves all instances optimally (Opt = 1), whereas MIP does not solve larger instances optimally (Opt $<$ 1 for most of the instances with $n = 40000$ and $p \geq 10$). As $p$ increases (with fixed $n = 40,000$), the Opt values of MIP rapidly decrease. This implies that MIP is not able to guarantee optimality in one hour for these instances. Further, the relative gap of MIP from the optimal solution ($\Delta$) is very large, which implies that MIP cannot provide a good solution within an hour. The execution times of both algorithms are similar when $n = 5000$. However, as $n$ increases, the $\rho$ values decrease, which indicates that AID performs great when the number of entries ($n$) is very large. For those rows with Opt of MIP less than 1, we used 3,600 seconds, which is a lower bound for the actual execution times of MIP with proved optimality. Hence, the actual $\rho$ values for these rows should be lower than the presented values. The number of AID iterations increases as $n$ and $p$ increase. As $n$ increases, the chance is higher that at least one original entry violates the optimality condition. As $p$ increases, the variety of the regression coefficients increases, as there exists $\binom{m}{p}$ subset choices, and the regression hyperplane can be changed drastically from one iteration to another iteration. These explain the increasing $T$ values in $n$ and $p$.

\begin{figure}[ht]
\centering
\includegraphics[scale=0.4]{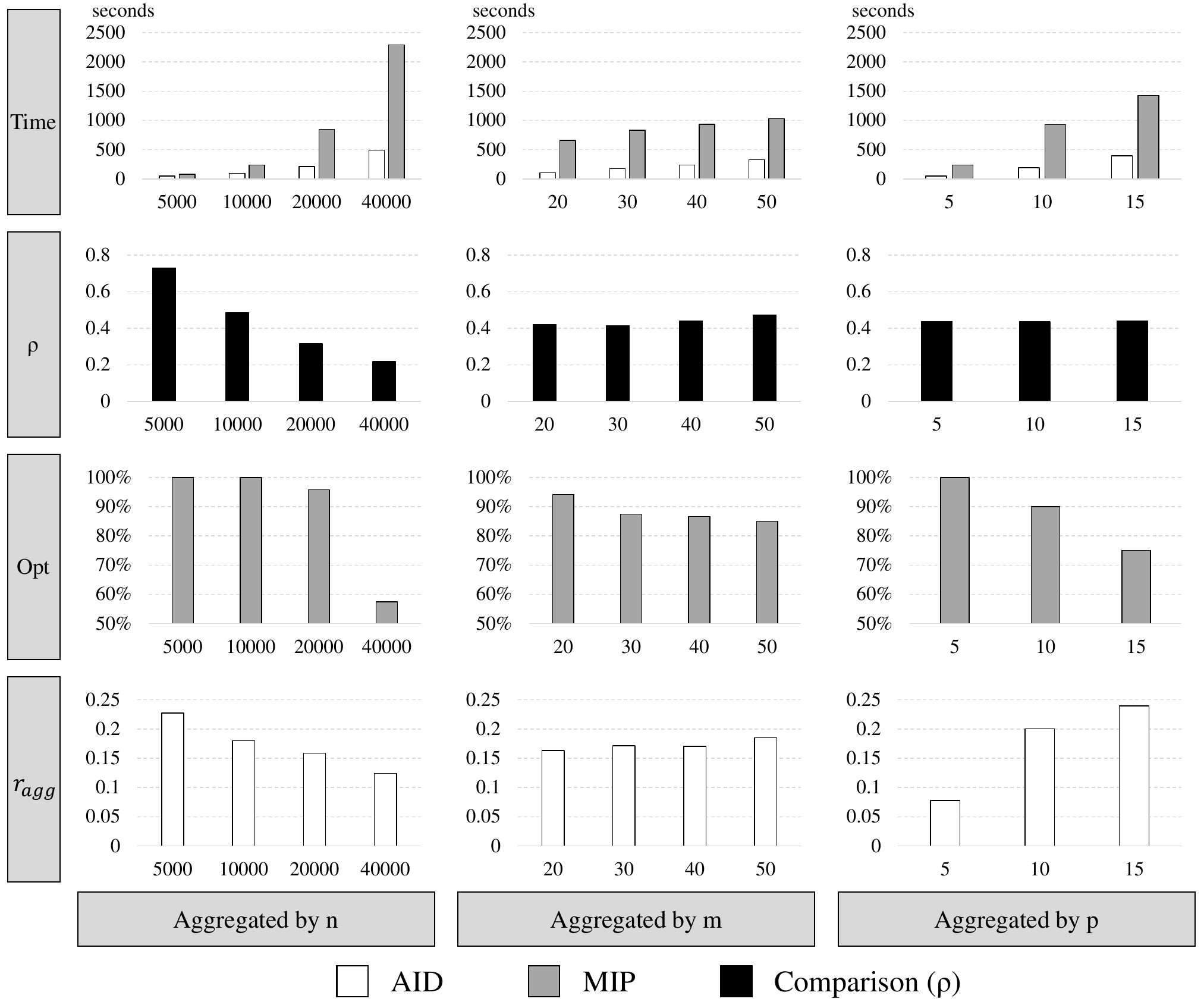} 
    \caption{Result for LAD regression subset selection aggregated by one of $n$,$m$, and $p$}
    \label{fig_exp_mip}  
\end{figure}

We also visually compare the performances of the algorithms by aggregating the results in Table \ref{table_exp_reg} further. In Figure \ref{fig_exp_mip}, the aggregated result is presented in the bar plot matrix. The rows of the plot matrix give performance measures, and the columns of the plot matrix indicate aggregation parameters. For example, the plot in the ``Time" row and ``Aggregated by $n$" column of the plot matrix shows the execution times of the algorithms aggregated by n. Hence, the horizontal axis is for $n$, and the vertical axis is for the execution time in seconds. Depending on the performance measure, either one or two series are plotted, where the white and gray bars are for AID and MIP, respectively, and the black bars are for the comparison.

In the first row of the plot matrix, we observe that the execution times of both algorithms increase in $n,m,$ and $p$. However, the execution time of MIP increases faster than that of AID, and this gives decreasing $\rho$ values with increasing $n$. This follows the intuition because a larger number of original entries gives a larger number of original entries per cluster, which helps AID. For increasing $m$ and $p$, the $\rho$ values stay the same or increase slightly. This is because, with larger $m$ and $p$, it is more difficult for AID to cluster the original entries and satisfy the optimality condition. As presented in the last row of the plot matrix, the aggregation rate decreases in $n$, but increases in $m$ and $p$.

Observe that MIP gaps are non-zero for $n = 40,000$, $m \in \{ 20,30,40,50\} $, and $p \in \{10,15\}$. For these cases, MIP algorithm is executed with a longer time limit (three hours) and the performance is compared with AID. In Table \ref{table_mip_1_vs_3hrs}, the results are presented for both one and three hours time limits. By comparing the results for the two time limits, we can observe that the optimality gaps are not easy to reduce by MIP for the additional two hours. While Opt values are higher for three hours time limit, the smaller $\rho$ values of three hours time limit result indicate that the relative performance of MIP is worse compared to AID.

In Table \ref{table_ncar}, the results for the real data set from SMU DataArts is presented. Trivial cases such as $p = 1$ or 2 are excluded, and we test both algorithms with various $p$ values. As both algorithms terminate with an optimal solution, Gap and Opt are not presented. Observe that we obtain similar results with Table \ref{table_exp_reg}. The execution times of both algorithms are similar to the results for instances with $n=5,000$ and $m=20$ in Table \ref{table_exp_reg}. AID outperforms the benchmark, and the $\rho$ values increase with increasing $p$. The aggregation rates indicate that, at the termination of AID, approximately 640 aggregated entries are in the aggregated data, and the $r_{\mbox{\begin{scriptsize}agg\end{scriptsize}}}$ values tend to increase with increasing $p$. Most of the selected subsets include commuting time in the area, competition in the area, marketing expense of the organization, and the number of grants received by the organization. This implies that the total number of ticket sales depends on (\rmnum{1}) how conveniently the customer can visit the art organization, (\rmnum{2}) the number of art organizations providing similar service in the neighborhood, (\rmnum{3}) the amount of money the organization spends on marketing, and (\rmnum{4}) the number of grants the organization received.

\begin{table}[htbp]
  \centering
  \begin{scriptsize}
    \begin{tabular}{|ccc|rrrrr|rrrrr|}
    \hline
    \multicolumn{3}{|c|}{Parameters} & \multicolumn{5}{c|}{MIP (1 hour)}      & \multicolumn{5}{c|}{MIP (3 hours)} \\ \hline
    $n$  & $m$     & $p$     & \multicolumn{1}{|c}{Gap} & \multicolumn{1}{c}{Time} & \multicolumn{1}{c}{Opt} & \multicolumn{1}{c}{ $\rho$ } & \multicolumn{1}{c|}{$\Delta$} & \multicolumn{1}{c}{Gap} & \multicolumn{1}{c}{Time} & \multicolumn{1}{c}{Opt} & \multicolumn{1}{c}{ $\rho$ } & \multicolumn{1}{c|}{$\Delta$} \\ \hline
    40,000 & 20    & 10    & 7.1\% & 2,441 & 0.9   & 0.78  & 24\%  & 0.0\% & 2,525 & 1     & 0.11  & 0\% \\
    40,000 & 20    & 15    & 55.7\% & 3,215 & 0.4   & 0.63  & 100\% & 0.0\% & 4,228 & 1     & 0.12  & 0\% \\
    40,000 & 30    & 10    & 49.6\% & 3,043 & 0.4   & 0.80  & 100\% & 20.0\% & 5,959 & 0.8   & 0.10  & 100\% \\
    40,000 & 30    & 15    & 89.6\% & 3,550 & 0.1   & 0.70  & 100\% & 29.4\% & 8,676 & 0.7   & 0.11  & 100\% \\
    40,000 & 40    & 10    & 40.7\% & 2,636 & 0.5   & 0.56  & 100\% & 16.8\% & 4,954 & 0.8   & 0.17  & 100\% \\
    40,000 & 40    & 15    & 97.2\% & 3,602 & 0     & 0.56  & 100\% & 54.0\% & 9,714 & 0.4   & 0.11  & 100\% \\
    40,000 & 50    & 10    & 17.5\% & 2,666 & 0.6   & 0.67  & 100\% & 9.5\% & 3,656 & 0.9   & 0.31  & 100\% \\
    40,000 & 50    & 15    & 93.7\% & 3,602 & 0     & 0.81  & 100\% & 67.0\% & 10,194 & 0.3   & 0.15  & 100\% \\ \hline
    \end{tabular}%
    \end{scriptsize}
      \caption{Performance of MIP with different time limits}
  \label{table_mip_1_vs_3hrs}%
\end{table}%

\begin{table}[ht]
  \centering
\begin{scriptsize}
    \begin{tabular}{|rrr|r|rrrr|rr|}
    \hline
    \multicolumn{3}{|c|}{Parameters} & MIP & \multicolumn{4}{c|}{AID} & \multicolumn{2}{c|}{Comparison}\\ \hline
    $n$     & $m$     &  $p$    & Time & Time & $T$ & $r_{\mbox{\begin{scriptsize}agg\end{scriptsize}}}$ & $K^T$ & $\Delta$ & $\rho$ \\ \hline
    6466  & 20    & 3     & 25.8  & 1.8   & 10    & 10\% & 622  & 0.00\% & 0.07 \\
          &       & 4     & 22.8  & 1.9   & 9     & 8\%  &495 & 0.00\% & 0.08 \\
          &       & 5     & 34.3  & 2.9   & 11    & 10\% &675 & 0.00\% & 0.09 \\
          &       & 6     & 27.9  & 6.1   & 10    & 11\%  &727& 0.00\% & 0.22 \\
          &       & 7     & 35.0  & 5.7   & 9     & 12\%  &756& 0.00\% & 0.16 \\
          &       & 8     & 40.0  & 5.6   & 9     & 11\%  &686& 0.00\% & 0.14 \\
          &       & 9     & 44.7  & 4.8   & 11    & 13\%  &813& 0.00\% & 0.11 \\
          &       & 10    & 37.9  & 12.1  & 10    & 13\%  &809& 0.00\% & 0.32 \\ \hline
    \end{tabular}%
\end{scriptsize}
      \caption{Results for LAD regression subset selection for SMU DataArts data}
  \label{table_ncar}%
\end{table}%

\subsection{L1 Regression over a Sphere}
\label{subsection_exp_reg_sphere}

We implement the QCLP model for \eqref{def_l1_reg_sphere} and AID for least absolute deviation regression on a sphere in C\# with CPLEX. Let \textsf{QCLP} and \textsf{AID} denote the QCLP model for \eqref{def_l1_reg_subset_selection} and the AID algorithm. For the CPLEX tolerance, we use the default for \eqref{def_l1_reg_subset_selection} and the aggregated problems of AID: tolerance on complementarity for convergence = $10^{-7}$. For the tolerance of AID (\textit{tol} in Algorithm \ref{algo_aid}), we use $10^{-7}$, which causes AID to terminate only if \eqref{aid_opt_gap} is smaller than $10^{-7}$. The initial clusters for AID are generated using the same procedure as in Section \ref{subsection_exp_reg_mip}. However, instead of five regression models, we use $(0.1) m$ regression models.

We employ randomly generated regression instances for the experiment. To generate random instances, we use \textsf{make\_regression} function of Python package Scikit Learn \cite{scikitlearn} with noise option equal to 1. The instances are generated from pair $(n,m)$, where $n \in \{25,50,100,200\} \times 1000$ is the number of original entries (data points), and $m \in \{20,40,60,80\}$ is the number of independent variables (features). For each pair $(n,m)$, 10 instances are generated, which yields a total of 160 instances. The true regression coefficients generated by the default setting of \textsf{make\_regression} are between 0 and 100. Hence, for each of these instances, we test $R \in \{20,40,60,80\} \times 1000$, where $R = 20$ is the most restrictive parameter, and $R=80$ is the least restrictive parameter. The synthetic data generation code is available in the online supplement.

In Table \ref{table_reg_sphere}, the aggregated result is presented. Each row of the two-column table is the average across the 10 outcomes for the corresponding triplet $(n,m,R)$. For QCLP, the execution times are presented. For AID, the execution time, number of iterations, and aggregation rates are presented. Because $r_{\mbox{\begin{scriptsize}agg\end{scriptsize}}}$ values are very small in many cases,  $r_{\mbox{\begin{scriptsize}agg\end{scriptsize}}}$ values are presented in percentage. For both algorithms, GAP values are zero percent across all instances and are omitted in this table. Finally, to compare the performance, $\rho$ and $\Delta$ values are given. In the column for $\rho$, values greater than 1 are in boldface.

\begin{table}[htbp]
  \centering
\begin{scriptsize}
\setlength{\tabcolsep}{3.1pt}
   \begin{tabular}{|ccc|r|rrrr|r|}
   \hline
    \multicolumn{3}{|c|}{Parameters} & \multicolumn{1}{c|}{QCLP} & \multicolumn{4}{c|}{AID} &   \\\hline
    $n$     & $m$     & $R$  & Time & Time &  \multicolumn{1}{c}{$T$} & $r_{\mbox{\begin{scriptsize}agg\end{scriptsize}}}$ & $K^T$ & $\rho$  \\ \hline
    25,000 & 20    & 20,000 & 17    & 2     & 11.2  & 0.34\% & 86    & 0.10 \\
    25,000 & 20    & 40,000 & 16    & 4     & 11.8  & 3.18\% & 796   & 0.30 \\
    25,000 & 20    & 60,000 & 16    & 15    & 13.0  & 9.64\% & 2,411  & 0.90 \\
    25,000 & 20    & 80,000 & 15    & 21    & 13.5  & 18.73\% & 4,683  & 1.49 \\
    25,000 & 40    & 20,000 & 38    & 3     & 9.6   & 0.45\% & 112   & 0.09 \\
    25,000 & 40    & 40,000 & 48    & 4     & 10.3  & 0.65\% & 164   & 0.08 \\
    25,000 & 40    & 60,000 & 40    & 4     & 10.5  & 0.91\% & 228   & 0.10 \\
    25,000 & 40    & 80,000 & 39    & 5     & 11.0  & 1.27\% & 318   & 0.13 \\
    25,000 & 60    & 20,000 & 70    & 5     & 8.8   & 0.55\% & 138   & 0.08 \\
    25,000 & 60    & 40,000 & 94    & 6     & 9.8   & 0.74\% & 184   & 0.06 \\
    25,000 & 60    & 60,000 & 92    & 6     & 10.0  & 0.96\% & 240   & 0.07 \\
    25,000 & 60    & 80,000 & 91    & 7     & 10.1  & 1.23\% & 308   & 0.08 \\
    25,000 & 80    & 20,000 & 117   & 8     & 8.5   & 0.63\% & 158   & 0.07 \\
    25,000 & 80    & 40,000 & 143   & 8     & 9.3   & 0.82\% & 205   & 0.06 \\
    25,000 & 80    & 60,000 & 150   & 9     & 9.7   & 1.02\% & 255   & 0.06 \\
    25,000 & 80    & 80,000 & 158   & 10    & 9.8   & 1.21\% & 302   & 0.07 \\ \hline
    50,000 & 20    & 20,000 & 30    & 3     & 10.9  & 0.17\% & 84    & 0.09 \\
    50,000 & 20    & 40,000 & 33    & 5     & 12.5  & 0.67\% & 337   & 0.15 \\
    50,000 & 20    & 60,000 & 32    & 31    & 14.0  & 13.93\% & 6,963  & 0.97 \\
    50,000 & 20    & 80,000 & 27    & 37    & 14.2  & 18.51\% & 9,256  & 1.38 \\
    50,000 & 40    & 20,000 & 84    & 6     & 10.4  & 0.22\% & 109   & 0.08 \\
    50,000 & 40    & 40,000 & 93    & 7     & 11.0  & 0.32\% & 162   & 0.07 \\
    50,000 & 40    & 60,000 & 77    & 7     & 10.7  & 0.44\% & 221   & 0.09 \\
    50,000 & 40    & 80,000 & 82    & 8     & 11.7  & 0.65\% & 324   & 0.10 \\
    50,000 & 60    & 20,000 & 156   & 11    & 9.8   & 0.28\% & 139   & 0.07 \\
    50,000 & 60    & 40,000 & 155   & 12    & 9.7   & 0.39\% & 197   & 0.08 \\
    50,000 & 60    & 60,000 & 175   & 12    & 10.3  & 0.51\% & 256   & 0.07 \\
    50,000 & 60    & 80,000 & 150   & 13    & 11.3  & 0.64\% & 322   & 0.09 \\
    50,000 & 80    & 20,000 & 303   & 17    & 9.1   & 0.34\% & 168   & 0.06 \\
    50,000 & 80    & 40,000 & 280   & 18    & 9.9   & 0.43\% & 217   & 0.07 \\
    50,000 & 80    & 60,000 & 294   & 19    & 10.2  & 0.53\% & 266   & 0.07 \\
    50,000 & 80    & 80,000 & 312   & 19    & 10.1  & 0.64\% & 322   & 0.06 \\ \hline
    \end{tabular} \hspace{0.1cm}
       \begin{tabular}{|ccc|r|rrrr|r|}
   \hline
    \multicolumn{3}{|c|}{Parameters} & \multicolumn{1}{c|}{QCLP} & \multicolumn{4}{c|}{AID} &   \\\hline
    $n$     & $m$     & $R$  & Time & Time &  \multicolumn{1}{c}{$T$} & $r_{\mbox{\begin{scriptsize}agg\end{scriptsize}}}$ & $K^T$ & $\rho$  \\ \hline
    100,000 & 20    & 20,000 &                 68  & 6     & 11.7  & 0.08\% & 82    & 0.09 \\
    100,000 & 20    & 40,000 &                 65  & 7     & 13.3  & 0.24\% & 241   & 0.11 \\
    100,000 & 20    & 60,000 &                 59  & 36    & 14.6  & 8.25\% & 8,248  & 0.65 \\
    100,000 & 20    & 80,000 &                 60  & 49    & 14.6  & 10.55\% & 1,0554 & 0.87 \\
    100,000 & 40    & 20,000 &              193  & 16    & 11.3  & 0.11\% & 112   & 0.08 \\
    100,000 & 40    & 40,000 &              162  & 14    & 10.7  & 0.17\% & 170   & 0.09 \\
    100,000 & 40    & 60,000 &              171  & 14    & 11.6  & 0.26\% & 257   & 0.08 \\
    100,000 & 40    & 80,000 &              160  & 16    & 12.7  & 0.37\% & 374   & 0.10 \\
    100,000 & 60    & 20,000 &              356  & 23    & 10.7  & 0.14\% & 143   & 0.06 \\
    100,000 & 60    & 40,000 &              319  & 24    & 11.0  & 0.18\% & 182   & 0.08 \\
    100,000 & 60    & 60,000 &              294  & 28    & 11.7  & 0.25\% & 249   & 0.10 \\
    100,000 & 60    & 80,000 &              302  & 27    & 11.7  & 0.34\% & 335   & 0.09 \\
    100,000 & 80    & 20,000 &              535  & 36    & 9.9   & 0.17\% & 167   & 0.07 \\
    100,000 & 80    & 40,000 &              562  & 40    & 10.3  & 0.22\% & 215   & 0.07 \\
    100,000 & 80    & 60,000 &              544  & 41    & 11.2  & 0.28\% & 278   & 0.08 \\
    100,000 & 80    & 80,000 &              507  & 42    & 10.9  & 0.33\% & 332   & 0.08 \\ \hline
    200,000 & 20    & 20,000 &              145  & 16    & 11.8  & 0.04\% & 81    & 0.11 \\
    200,000 & 20    & 40,000 &              135  & 15    & 13.6  & 0.11\% & 227   & 0.11 \\
    200,000 & 20    & 60,000 &              166  & 67    & 15.1  & 4.95\% & 9,908  & 0.38 \\
    200,000 & 20    & 80,000 &              132  & 122   & 16.3  & 11.88\% & 23,765 & 0.99 \\
    200,000 & 40    & 20,000 &              403  & 40    & 12.6  & 0.06\% & 119   & 0.10 \\
    200,000 & 40    & 40,000 &              342  & 42    & 12.4  & 0.09\% & 170   & 0.12 \\
    200,000 & 40    & 60,000 &              345  & 42    & 12.8  & 0.14\% & 271   & 0.12 \\
    200,000 & 40    & 80,000 &              333  & 40    & 13.5  & 0.22\% & 439   & 0.12 \\
    200,000 & 60    & 20,000 &              825  & 69    & 10.4  & 0.07\% & 137   & 0.08 \\
    200,000 & 60    & 40,000 &              754  & 77    & 11.6  & 0.10\% & 198   & 0.10 \\
    200,000 & 60    & 60,000 &              630  & 78    & 11.4  & 0.13\% & 263   & 0.13 \\
    200,000 & 60    & 80,000 &              644  & 78    & 12.8  & 0.18\% & 362   & 0.12 \\
    200,000 & 80    & 20,000 &           1,348  & 127   & 11.5  & 0.08\% & 157   & 0.10 \\
    200,000 & 80    & 40,000 &           1,445  & 125   & 10.9  & 0.10\% & 206   & 0.09 \\
    200,000 & 80    & 60,000 &           1,162  & 114   & 11.5  & 0.13\% & 260   & 0.10 \\
    200,000 & 80    & 80,000 &           1,079  & 110   & 11.6  & 0.16\% & 311   & 0.10 \\ \hline
    \end{tabular} 
\end{scriptsize}
      \caption{Results for LAD regression over a sphere aggregated by $n,m$, and $R$}
  \label{table_reg_sphere}%
\end{table}%

The execution times of QCLP increase with increasing $n$ and $m$. The execution times of AID increase with increasing $n$ and decreasing $R$. The $\rho$ values show that AID is much faster than QCLP except for a few cases. Particularly, AID is slower when $n$ is smaller (25,000 and 50,000), but $R$ is large ($R = 80$). We think that the quadratic constraint of \eqref{qclp_l1_reg_sphere} with $R =80$ is not restricting the regression coefficients significantly, and the benefit of using AID diminishes with the non-restrictive constraint. The number of AID iterations increases with increasing $n$ and decreasing $R$. As $n$ increases, the chance is higher that at least one original entry will violate the optimality condition. As $R$ increases, the quadratic constraint becomes less restrictive, and the regression coefficients do not change drastically from one AID iteration to another. These explain why $T$ is increasing with increasing $n$ and decreasing $R$.

We also visually compare the performances of the algorithms by aggregating the results from Table \ref{table_reg_sphere} further. In Figure \ref{fig_exp_sphere}, the aggregated result is presented in the bar plot matrix. The rows of the plot matrix give performance measures, and the columns of the plot matrix show how the results are aggregated.

\begin{figure}[ht]
\centering
\includegraphics[scale=0.4]{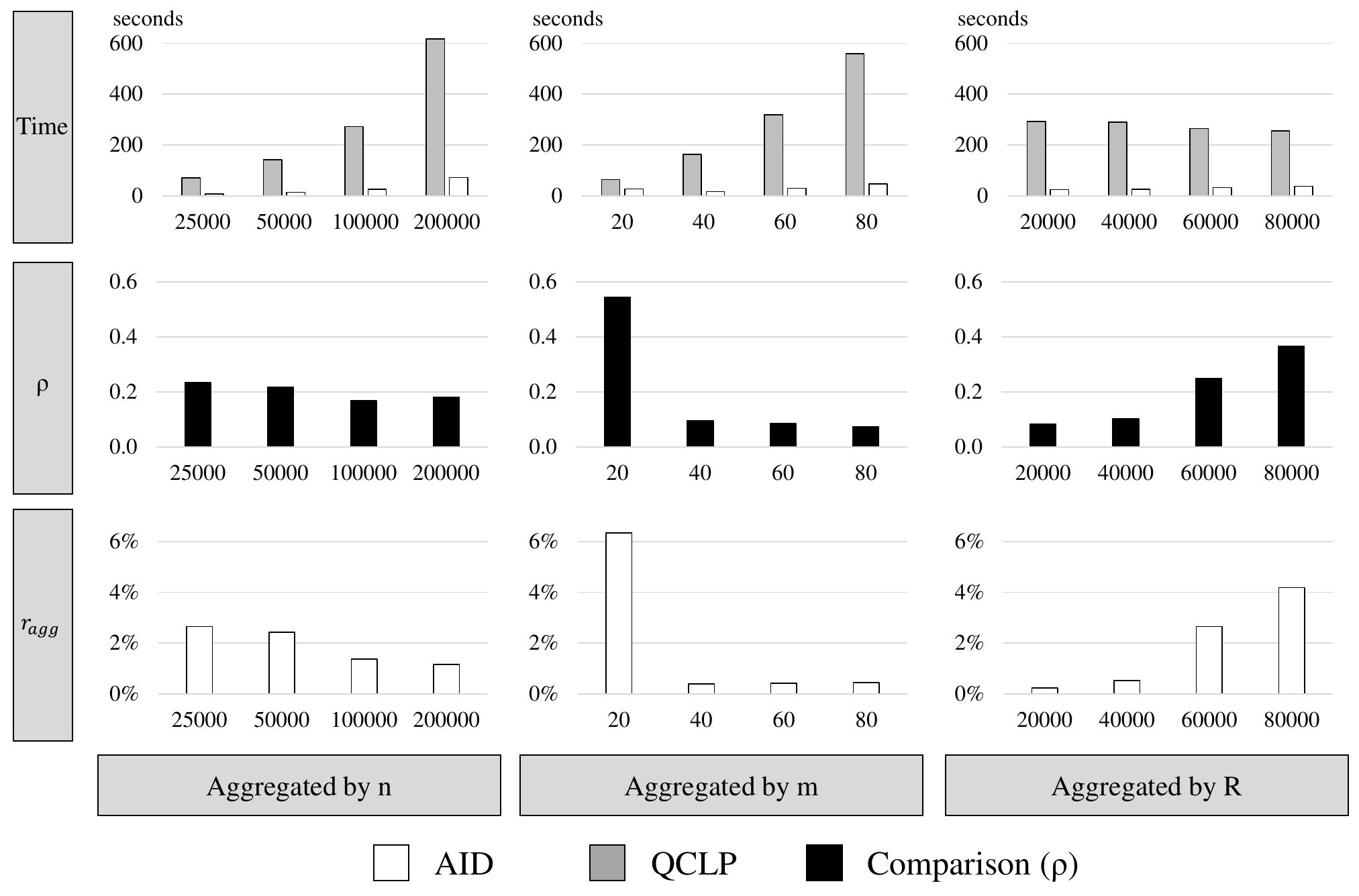} 
    \caption{Results for $L_1$ regression over a sphere aggregated by one of $n,m$, and $R$}
    \label{fig_exp_sphere}  
\end{figure}

Similar to the results in Section 4.1, the $\rho$ values decrease with increasing $n$, as AID obtains a larger benefit when more original entries can be clustered into a cluster. However, the $\rho$ values change differently for this problem. As $m$ decreases, the problem becomes more difficult because we have more regression coefficients (with larger $m$) with a fixed $R$. Similarly, the problem becomes more difficult when $R$ decreases because the constraint is more restrictive with a fixed $m$. When the problem is difficult, the $\rho$ values are smaller.

\subsection{L1 Principal Component Analysis}
\label{section_experiment_l1pca}
Although AID does not guarantee optimality for \eqref{def_l1_pca_max}, we still use the exact algorithm of Markopoulos et al. \cite{Markopoulos} implemented in MATLAB to maintain solution accuracy of the aggregated problems, where the aggregated problem uses the adjusted data as described in Section \ref{sectoin_application}. Let MKP and AID denote the algorithm in Markopoulos et al. \cite{Markopoulos} and the AID algorithm. We first check the performance of AID against MKP for small instances. Then, we check the performance of AID against other benchmark heuristics in the literature for larger instances: Kwak \cite{Kwak:08}, Markopoulos et al. \cite{markopoulos2017efficient}, and Nie et al. \cite{Nie-etal:11} are denoted by Kwak, BF, and Nie, respectively. Unlike the other problems, the instance we consider for \eqref{def_l1_pca_max} is small, while the aggregated problem is very sensitive to changes in the data matrix. Hence, the tolerance of AID (\textit{tol} in Algorithm \ref{algo_aid}) is set to zero. 

The initial clusters are generated by the projected data $A\bar{X}$, where $\bar{X}$ is the $m$ by $p$ principal component matrix obtained by the standard $L_2$ PCA. The procedure is motivated by the assumption that $L_2$ PCA gives a good initial solution to the $L_1$ PCA problem. Also, the computational cost is relatively inexpensive.

We test the algorithms with sampled sub-matrices of the data sets from the UCI Machine Learning Repository \cite{Lichman:2013}. We use four original UCI instances (Cardiotocography, Ionosphere, Indian Liver Patient Dataset, Connectionist Bench (Sonar)) and sample $n$ original entries and $m$ features, where $n \in \{10,15,20,25\}$ and $m \in \{3,4,5\}$. For each $(n,m)$ pair of the UCI data, we sample 10 instances.

In Table \ref{table_l1pcamax_small}, the result aggregated by $n,m,$ and $p$ is presented. Although the sampled instances from four different data sets are considered, we were unable to find performance differences between them. Hence, the result is aggregated by $n,m,$ and $p$, and each row in Table \ref{table_l1pcamax_small} is the average of 40 outcomes (10 instances for n and m pair and 4 data sets). The last column \textit{Speedup} = $1/\rho$ is added because the $\rho$ values are too small to check the effect of AID in detail. When $p=1$, the execution times of both algorithms are within one second (except for one case). However, the $\rho$ values indicate that AID is faster than MKP, and the $\rho$ values decrease in $n$. When $p = 2$, the improvement by AID is even more significant. The $\rho$ values decrease as $n$ and $m$ increase, and AID is 9288 times faster than MKP in the best case. The $\Delta$ values are not zero because (\rmnum{1}) AID does not guarantee optimality and (\rmnum{2}) the aggregated problem solver uses SVD and is sensitive to small changes in the matrix. However, the $\Delta$ values are less than 1\% for all cases and the empirical performance of AID is near-optimal. 

\begin{table}[ht]
  \centering

  \begin{scriptsize}
      \begin{tabular}{|ccc|r|ccrr|ccr|}
      \hline
    \multicolumn{3}{|c|}{Parameters} & \multicolumn{1}{c|}{MKP \cite{Markopoulos}} & \multicolumn{4}{c|}{AID } & \multicolumn{3}{c|}{Comparison} \\ \hline
    $n$     & $m$     & $p$     & Time & $T$  & $r_{\mbox{\begin{scriptsize}agg\end{scriptsize}}}$ & $K^T$ & Time & $\Delta$ & $\rho$   & Speedup \\ \hline
    10    & 3     & 1     & $<$0.01 & 1.4   & 0.44  & 4.4   & $<$0.01 & 0.76\% & 0.2806 & 3.7 \\
    10    & 4     & 1     & 0.01  & 1.2   & 0.52  & 5.2   & $<$0.01 & 0.17\% & 0.1989 & 7.5 \\
    10    & 5     & 1     & 0.03  & 1.2   & 0.62  & 6.2   & $<$0.01 & 0.19\% & 0.1362 & 8.3 \\
    15    & 3     & 1     & $<$0.01 & 1.5   & 0.30  & 4.5   & $<$0.01 & 0.57\% & 0.1374 & 8.4 \\
    15    & 4     & 1     & 0.04  & 1.5   & 0.37  & 5.5   & $<$0.01 & 0.08\% & 0.0431 & 25.9 \\
    15    & 5     & 1     & 0.16  & 1.4   & 0.42  & 6.4   & $<$0.01 & 0.20\% & 0.0235 & 49.5 \\
    20    & 3     & 1     & $<$0.01 & 1.7   & 0.24  & 4.7   & $<$0.01 & 0.25\% & 0.0770 & 13.8 \\
    20    & 4     & 1     & 0.09  & 1.5   & 0.28  & 5.5   & $<$0.01 & 0.34\% & 0.0182 & 60.4 \\
    20    & 5     & 1     & 0.55  & 1.4   & 0.32  & 6.5   & $<$0.01 & 0.14\% & 0.0080 & 151.1 \\
    25    & 3     & 1     & 0.01  & 1.7   & 0.19  & 4.7   & $<$0.01 & 0.60\% & 0.0517 & 19.9 \\
    25    & 4     & 1     & 0.17  & 1.7   & 0.23  & 5.7   & $<$0.01 & 0.87\% & 0.0097 & 111.5 \\
    25    & 5     & 1     & 1.42  & 1.6   & 0.26  & 6.6   & $<$0.01 & 0.18\% & 0.0030 & 371.9 \\ \hline
    10    & 3     & 2     & 0.01  & 2.0   & 0.52  & 5.2   & $<$0.01 & 0.40\% & 0.1623 & 7.8 \\
    10    & 4     & 2     & 0.06  & 1.7   & 0.58  & 5.8   & $<$0.01 & 0.25\% & 0.0711 & 18.3 \\
    10    & 5     & 2     & 0.17  & 1.3   & 0.63  & 6.3   & $<$0.01 & 0.41\% & 0.0467 & 26.7 \\
    15    & 3     & 2     & 0.03  & 2.2   & 0.37  & 5.5   & $<$0.01 & 0.32\% & 0.0481 & 28.0 \\
    15    & 4     & 2     & 0.54  & 2.0   & 0.43  & 6.4   & $<$0.01 & 0.43\% & 0.0135 & 140.0 \\
    15    & 5     & 2     & 4.81  & 2.1   & 0.48  & 7.3   & 0.03  & 0.30\% & 0.0058 & 464.8 \\
    20    & 3     & 2     & 0.10  & 2.6   & 0.30  & 6.0   & $<$0.01 & 0.39\% & 0.0188 & 67.6 \\
    20    & 4     & 2     & 3.07  & 2.3   & 0.34  & 6.9   & 0.01  & 0.61\% & 0.0032 & 629.5 \\
    20    & 5     & 2     & 55.46 & 2.6   & 0.41  & 8.1   & 0.09  & 0.34\% & 0.0016 & 2835.5 \\
    25    & 3     & 2     & 0.22  & 3.1   & 0.28  & 6.9   & $<$0.01 & 0.29\% & 0.0128 & 115.1 \\
    25    & 4     & 2     & 11.95 & 2.9   & 0.30  & 7.6   & 0.02  & 0.28\% & 0.0015 & 2030.6 \\
    25    & 5     & 2     & 360.78 & 2.9   & 0.35  & 8.8   & 0.18  & 0.58\% & 0.0005 & 9288.6 \\ \hline
    \end{tabular}%
  \end{scriptsize}
  \caption{Results for $L_1$ PCA aggregated by $n,m,$ and $p$}
  \label{table_l1pcamax_small}%
\end{table}%

Because the benchmark algorithm has scalability issues, we tested very small instances in Table \ref{table_l1pcamax_small}. However, AID solves most of the instances in Table \ref{table_l1pcamax_small} within 0.1 second. Because AID does not guarantee optimality, we also examine the performance of AID for larger instances and compare against the benchmark heuristics. Using the same UCI data and sampling procedure, we create instances with $n \in \{60,80,100\}$ and $m \in \{3,4,5\}$. However, due to the high time complexity of the aggregated problem solver, we were not able to consider larger $m$ and $p$. In Table \ref{table_l1pcamax_large}, the performances of the four algorithms are reported for larger instances. Due to the numerical errors, AID is not able to find the exact optimal solutions for all cases. Hence, for each algorithm in Table 8, $\Delta$ represents the relative gap from the maximum objective value of the four algorithms. Each row is obtained by averaging the results of 10 instances. The smallest $\Delta$ value for each row is in boldface.

\begin{table}[htbp]
  \centering
  \setlength{\tabcolsep}{4pt}
\begin{scriptsize}
    \begin{tabular}{|ccc|rr|rr|rr|rrrrr|}
    \hline
    \multicolumn{3}{|c|}{Parameters} & \multicolumn{2}{c|}{Kwak \cite{Kwak:08}} & \multicolumn{2}{c|}{Nie \cite{Nie-etal:11}} & \multicolumn{2}{c|}{BF \cite{markopoulos2017efficient}} & \multicolumn{5}{c|}{AID } \\     \hline
    $n$     & $m$     & $p$     & Time  & $\Delta$      & Time  & $\Delta$      & Time  & $\Delta$      & $T$  & Time   & $r_{\mbox{\begin{scriptsize}agg\end{scriptsize}}}$ & $K^t$    & $\Delta$    \\     \hline
    60    & 3     & 1     & $<$0.01  & 0.4\% & 0.01  & 1.3\% & $<$0.01  & \textbf{0.0\%}  & 2.3   & $<$0.01  & 0.09  & 5.4   & 0.2\% \\
    80    & 3     & 1     & $<$0.01  & 0.3\% & $<$0.01  & 1.9\% & $<$0.01  & \textbf{0.2\%} & 2.2   & $<$0.01  & 0.07  & 5.3   & 0.4\% \\
    100   & 3     & 1     & $<$0.01  & 0.3\% & 0.01  & 1.7\% & $<$0.01  & \textbf{0.0\%} & 2.3   & $<$0.01  & 0.05  & 5.4   & 0.3\% \\
    60    & 4     & 1     & $<$0.01  & 0.2\% & 0.01  & 3.2\% & $<$0.01  & \textbf{0.0\%} & 2.4   & $<$0.01  & 0.11  & 6.5   & 0.1\% \\
    80    & 4     & 1     & $<$0.01  & 0.1\% & 0.01  & 3.7\% & $<$0.01  & \textbf{0.0\%} & 2.4   & $<$0.01  & 0.08  & 6.6   & 0.1\% \\
    100   & 4     & 1     & $<$0.01  & 0.2\% & 0.01  & 1.5\% & $<$0.01  & \textbf{0.1\%} & 2.7   & $<$0.01  & 0.07  & 7.1   & 0.3\% \\
    60    & 5     & 1     & $<$0.01  & 0.2\% & 0.01  & 3.2\% & $<$0.01  & \textbf{0.1\%} & 2.2   & 0.01  & 0.12  & 7.4   & \textbf{0.1\%} \\
    80    & 5     & 1     & $<$0.01  & 0.3\% & 0.01  & 1.5\% & $<$0.01  & \textbf{0.1\%} & 2.3   & 0.01  & 0.09  & 7.4   & 0.2\% \\
    100   & 5     & 1     & $<$0.01  & 0.1\% & 0.01  & 1.2\% & $<$0.01  & \textbf{0.0\%} & 2.6   & 0.01  & 0.08  & 7.9   & 0.1\% \\ \hline
    60    & 3     & 2     & $<$0.01  & 9.6\% & 0.01  & 1.4\% & 0.02  & \textbf{0.2\%} & 3.8   & 0.01  & 0.15  & 8.9   & 0.4\% \\
    80    & 3     & 2     & $<$0.01  & 9.3\% & 0.01  & 1.1\% & 0.03  & \textbf{0.1\%} & 4.4   & 0.01  & 0.13  & 10.4  & 0.3\% \\
    100   & 3     & 2     & $<$0.01  & 9.3\% & 0.01  & 1.5\% & 0.04  & \textbf{0.1\%} & 4.7   & 0.03  & 0.12  & 11.7  & 0.5\% \\
    60    & 4     & 2     & 0.01  & 6.5\% & 0.01  & 1.2\% & 0.02  & \textbf{0.2\%} & 3.9   & 0.22  & 0.18  & 10.5  & 0.3\% \\
    80    & 4     & 2     & $<$0.01  & 6.6\% & 0.01  & 1.4\% & 0.03  & 0.4\% & 3.8   & 0.27  & 0.14  & 11.1  & \textbf{0.2\%} \\
    100   & 4     & 2     & $<$0.01  & 6.5\% & 0.01  & 0.8\% & 0.05  & 0.4\% & 4.4   & 2.07  & 0.12  & 12.5  & \textbf{0.2\%} \\
    60    & 5     & 2     & $<$0.01  & 5.7\% & $<$0.01  & 1.9\% & 0.02  & 0.4\% & 3.6   & 5.16  & 0.19  & 11.3  & \textbf{0.3\%} \\
    80    & 5     & 2     & $<$0.01  & 5.4\% & 0.01  & 1.7\% & 0.04  & 0.6\% & 4.3   & 27.71 & 0.16  & 12.5  & \textbf{0.2\%} \\
    100   & 5     & 2     & $<$0.01  & 5.3\% & 0.01  & 0.9\% & 0.05  & \textbf{0.4\%} & 4.5   & 12.88 & 0.13  & 13.4  & \textbf{0.4\%} \\ \hline
    \multicolumn{3}{|c|}{Overall Average} & $<$0.01  & 3.7\% & 0.01  & 1.7\% & 0.02  & 0.2\% & 3.3   & 2.69  & 0.12  & 9.0   & 0.3\% \\ \hline
    \end{tabular}%
\end{scriptsize}

      \caption{Perforamances of AID and benchmark heuristics for $L_1$ PCA}
  \label{table_l1pcamax_large}%
\end{table}%

Among the four algorithms, BF provides the best result on average. The $\Delta$ values of BF are the smallest on average. AID gives slightly larger $\Delta$ values than BF, but the $\Delta$ values are still within $1\%$. Further, $\Delta$ values tend to be the smallest when $m$ and $p$ are larger ($m \geq 4, p = 2$). When $p=1$, AID spends less than 0.01 second. However, when $p = 2$, the execution times of AID also increase as $n$ and $m$ increase. For most cases with $p=2$, AID still terminates within a second. However, for a few cases, AID spends more than 100 seconds because AID reaches the point at which the aggregated problem cannot be solved quickly by the aggregated problem solver. This also explains the unusual case in Table \ref{table_l1pcamax_large}, where the average execution time for $(n,m,p) = (80,5,2)$ is greater than the average execution time for $(n,m,p) = (100,5,2)$.

To further investigate the impact of AID, in Figure \ref{fg_l1pcamax_plot}, we present scatter plots using the results in Tables \ref{table_l1pcamax_small} and \ref{table_l1pcamax_large}. In each plot of Figure \ref{fg_l1pcamax_plot}, the horizontal axis is the value of $\ln(n^{mp - p + 1})$, while the vertical axes are $\ln(1/\rho)$ and $\Delta$ in Figures \ref{fg_l1pcamax_plot_a} and \ref{fg_l1pcamax_plot_b}, respectively. The values of $\ln(n^{mp - p + 1})$ represent the worst-case complexity $O(n^{mp-p+1})$ of the benchmark algorithm with respect to the problem size. The values of $\ln(1/\rho)$ represent the speed-up by AID. Note that Figure \ref{fg_l1pcamax_plot_a} only includes the results from Table \ref{table_l1pcamax_small} as it considers the speed up from the exact benchmark algorithm, while Figure \ref{fg_l1pcamax_plot_b} includes the results from the two tables (white and gray circles for Tables \ref{table_l1pcamax_small} and \ref{table_l1pcamax_large}, respectively). Observe that a straightforward relationship exists in Figure \ref{fg_l1pcamax_plot_a}. As $n,m,$ and $p$ increase, $n^{mp - p + 1}$ increases and the speed-up by AID increases. Therefore, we expect that the $\rho$ values would be much smaller for larger data sets, although we cannot investigate this further due to the scalability issues. On the other hand, in Figure \ref{fg_l1pcamax_plot_b}, the values of $\Delta$ imply that the relative gap due to the non-optimality of AID is stable over increasing data size.

\begin{figure}[ht]
\begin{center}
		\subfigure[Complexity and speed-up]{%
           \includegraphics[scale=0.5]{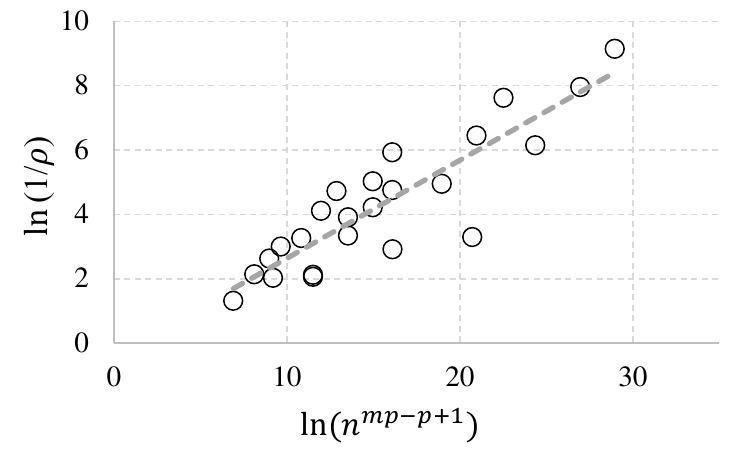} \label{fg_l1pcamax_plot_a}
        }\qquad
        \subfigure[Complexity and $\Delta$]{%
           \includegraphics[scale=0.5]{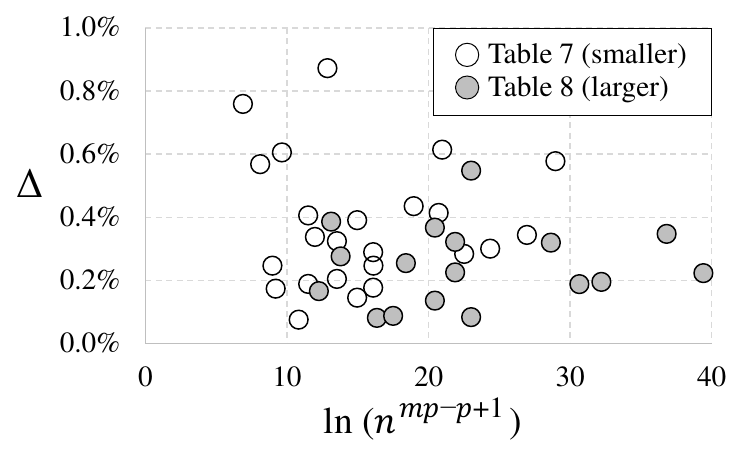} \label{fg_l1pcamax_plot_b}
        }
\end{center}
\vspace{-0.5cm}
    \caption{Relationship between benchmark algorithm complexity, speed-up, and relative gap}
    \label{fg_l1pcamax_plot}  
\end{figure}

\section{Discussion}

In this section, we discuss a few characteristics and implementation issues of AID. The discussion here could be useful when AID is implemented for a new problem. In Section \ref{section_discussion_1}, the performance of AID over increasing number of features is discussed. In Section \ref{section_discussion_2}, the performance of AID for larger instances (than the ones in Section \ref{section_comp_experiment} is discussed. In Section \ref{section_discussion_3}, we discuss the type of problems AID can most benefit. In Section \ref{section_discussion_4}, we check how inputting different data sets to the initial clustering algorithm can change the performance of AID. In Section \ref{section_discussion_5}, the impact of different initial clustering methods is discussed. Finally, in Section \ref{section_discussion_6}, discussion on how to increase the speed of AID is presented.

\subsection{Performance of AID over Increasing Number of Features}
\label{section_discussion_1}

In Sections \ref{subsection_exp_reg_mip} and \ref{subsection_exp_reg_sphere}, the aggregated performances of AID are presented with various $n$, $m$, $p$, and $R$, where each triplet $(n,m,p)$ or $(n,m,R)$ includes 10 instances. In this section, the performance of AID is examined for increasing $m \in \{20,30,\cdots,80,90 \}$ and fixed $n$, $p$, and $R$ based on a large number of instances, where each triplet $(n,m,p)$ or $(n,m,R)$ includes 500 instances. In detail, $n$ and $p$ are fixed to 1,000 and 5, respectively, for the regression subset selection and $n$ and $R$ are fixed to 1,000 and 20,000, respectively, for the regression over a sphere. In Figure \ref{fg_exp_effect_m}, the execution times, number of iterations, and aggregation rates are presented for the regression subset selection (MIP) and for the regression over a sphere (QCLP) in Figures \ref{fg_exp_effect_m_time}, \ref{fg_exp_effect_m_iter}, and \ref{fg_exp_effect_m_aggrate}, respectively.

\begin{figure}[ht]
     \begin{center}
        \subfigure[Time]{%
           \includegraphics[scale=0.53]{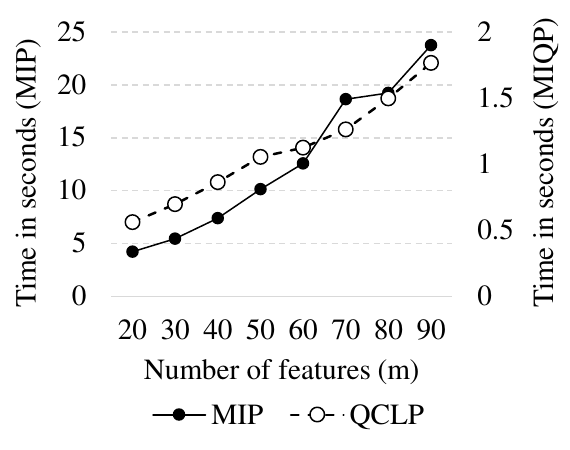} \label{fg_exp_effect_m_time}
        }\quad
        \subfigure[$T$]{%
           \includegraphics[scale=0.53]{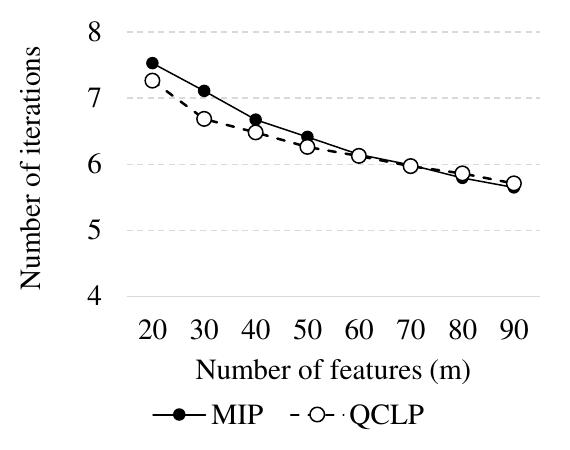} \label{fg_exp_effect_m_iter}
        }\quad
        \subfigure[$r_{agg}$]{%
           \includegraphics[scale=0.53]{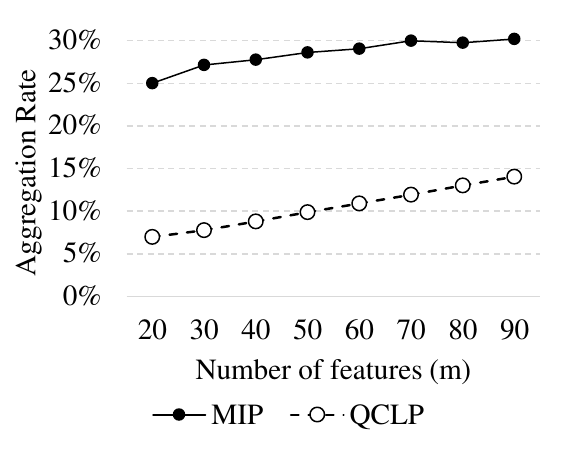} \label{fg_exp_effect_m_aggrate}
        }
    \end{center}
    \vspace{-0.3cm}
    \caption{Performance of AID over increasing number of features}
\label{fg_exp_effect_m}
\end{figure}

The overall trends are identical to the results in Section \ref{section_comp_experiment}. The execution times of AID show approximately linear time complexity in Figure \ref{fg_exp_effect_m_time} while the magnitudes of the actual execution times are different for the two different problems. The number of iterations decreases in increasing $m$ in Figure \ref{fg_exp_effect_m_iter}. Hence, we can conclude that the average time per iteration increases fast in increasing $m$. The aggregation rates in Figure \ref{fg_exp_effect_m_aggrate} implies that more clusters (or aggregated observations) are needed when $m$ is large. However, the number of clusters needed depends on the problem type.

\subsection{Performance of AID for Larger Instances}
\label{section_discussion_2}

In this section, the performance of AID is presented for instances larger than those used in Section \ref{section_comp_experiment}. In Tables \ref{table_larger_regsubset} and \ref{table_larger_regsphere}, the results for the regression subset selection and regression over a sphere are presented for various large $n$ and $m$. For each $(n,m)$ pair in Tables \ref{table_larger_regsubset} and \ref{table_larger_regsphere}, 100 instances are solved and each row of the tables presents the average values.

\begin{table}[htbp]
  \centering
  \begin{scriptsize}
      \begin{tabular}{|rrr|rrrrr|}
      \hline
    $n$ & $m$ & $p$& Gap & Time & $T$ & $r_{\mbox{\begin{scriptsize}agg\end{scriptsize}}}$ & $K^T$ \\ \hline
    5,000 & 50  &5  & 0.00\% & 27    & 7.1   & 14.3\% & 717 \\
    5,000 & 100 &5  & 0.00\% & 73    & 6.5   & 14.6\% & 730 \\
    5,000 & 150   &5& 0.00\% & 77    & 5.9   & 14.0\% & 698 \\
    5,000 & 200 &5  & 0.00\% & 102   & 5.4   & 13.4\% & 668 \\ \hline
    100,000 & 20 &5   & 0.00\% & 193   & 10    & 1.7\% & 1,650 \\
    120,000 & 20  &5  & 0.00\% & 238   & 9.9   & 1.2\% & 1,451 \\
    140,000 & 20   &5 & 0.01\% & 301   & 10.5  & 1.5\% & 2,041 \\
    160,000 & 20 &5   & 0.04\% & 384   & 9.2   & 0.8\% & 1,281 \\
    180,000 & 20 &5   & 0.49\% & 429   & 8.3   & 0.7\% & 1,295 \\
    200,000 & 20 &5   & 1.04\% & 474   & 7.5   & 0.5\% & 973 \\ \hline
    \end{tabular}%
  \end{scriptsize}
    \caption{Performance of AID for LAD regression subset selection with fixed $p = 5$}
  \label{table_larger_regsubset}%
\end{table}%

\begin{table}[htbp]
  \centering
  \begin{scriptsize}
      \begin{tabular}{|rrr|rrrr|}
      \hline
    $n$ & $m$ & $R$ & Time & $T$ & $r_{\mbox{\begin{scriptsize}agg\end{scriptsize}}}$ & $K^T$ \\ \hline
    10,000 & 100   & 50,000 & 8     & 8.3   & 2.4\% & 241 \\
    10,000 & 200   & 50,000 & 18    & 7.3   & 3.6\% & 361 \\
    10,000 & 300   & 50,000 & 34    & 7     & 4.7\% & 467 \\
    10,000 & 400   & 50,000 & 58    & 7.1   & 5.8\% & 583 \\ \hline
    500,000 & 20    & 20,000 & 16    & 12.2  & 0.02\% & 81 \\
    1,000,000 & 20    & 20,000 & 31    & 15.2  & 0.01\% & 90 \\
    1,500,000 & 20    & 20,000 & 47    & 14    & 0.01\% & 83 \\
    2,000,000 & 20    & 20,000 & 58    & 14.6  & 0.01\% & 113 \\ \hline
    \end{tabular}%
  \end{scriptsize}
    \caption{Performance of AID for LAD regression over a sphere with fixed $R = 50,000$ or $20,000$}
  \label{table_larger_regsphere}%
\end{table}%

From the results in Tables \ref{table_larger_regsubset} and \ref{table_larger_regsphere}, we confirm that AID still takes reasonable amount of time for the larger instances. We again observe that the number of iterations ($T$) decreases in increasing $m$ and $n$ while the overall execution time and per-iteration time increase in increasing $m$ and $n$. The aggregation rate decreases in increasing $n$ and increases in increasing $m$. The positive Gap values in Table \ref{table_larger_regsubset} means AID was not able to terminate with optimality for a few instances within the one-hour time limit.

\subsection{Performance of AID for Different Problems}
\label{section_discussion_3}

In this section, we evaluate the differing performance of AID for the problems considered in Section \ref{section_comp_experiment}. Although the optimality of AID for \eqref{def_l1_pca_max} is not guaranteed, combining all of the results in Section \ref{section_comp_experiment} across multiple problems provides valuable insights. In Figure \ref{fg_exp_allthree}, we plot combined results presented in Tables \ref{table_exp_reg}, \ref{table_reg_sphere}, and \ref{table_l1pcamax_small}. The horizontal and vertical axes are for $n \cdot m$ and $\rho$, where $n \cdot m$ is the size of the data matrix (number of rows $\times$ number of columns) and $\rho$ is the execution time ratio between AID and the benchmark. Although there exist additional parameters other than $n$ and $m$ (such as $p$ and $R$), we do not distinguish them here. The squares are for the regression subset selection (denoted by MIP in the figure) result from Table \ref{table_exp_reg}, the rhombuses are for the regression over a sphere (denoted by QCLP in the figure) result from Table \ref{table_reg_sphere}, and the circles are for the $L_1$ PCA (denoted by PCA in the figure) result from Table \ref{table_l1pcamax_small}.

\begin{figure}[ht]
\centering
\includegraphics[scale=0.65]{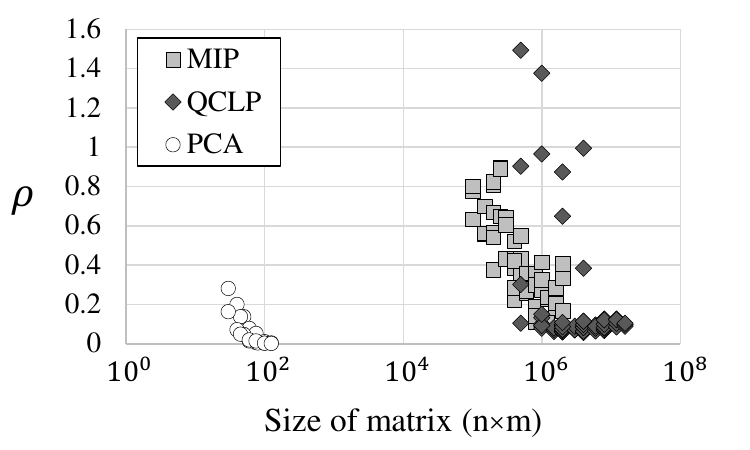}
    \caption{Comparison of performance of AID for different problems}
\label{fg_exp_allthree}
\end{figure}

Observe that all the $\rho$ values of $L_1$ PCA are smaller than 0.4, while the matrix sizes are smaller than for the other two problems. The data matrices are approximately $10^4$ times smaller for $L_1$ PCA, but the improvement by AID is the largest. This is because the time complexity of the algorithm for the original problem is high. Between \eqref{def_l1_reg_subset_selection} and \eqref{def_l1_reg_sphere}, the $\rho$ values of the regression subset selection problem tend to be smaller than the $\rho$ values of the other. This is expected because an MIP generally is more difficult to solve than a QCLP. To have $\rho$ values smaller than 1, the data matrix for QCLP should be large enough (here, $n \times m > 10^6)$. Therefore, we conclude that AID is recommended and has the largest potential benefit when the algorithm for the original problem has a high time complexity. In other words, if the original problem is easy to solve, AID can be slower than the original algorithm.

\subsection{Performance of AID with Different Initial Cluster Data}
\label{section_discussion_4}

In this section, we test the effect of using different initial clusters for AID for \eqref{def_l1_reg_sphere} with fixed $n = 100,000$ and $R = 20,000$. In detail, we compare two different inputs to Lloyd's algorithm for K-means clustering. The first approach employs the residuals of $m$ regression models, and the second approach uses the original data set. In Table \ref{table_clusters_effect}, each row is the average of 10 instances. 

\begin{table}[htbp]
  \centering
  \begin{scriptsize}
      \begin{tabular}{|r|p{1.5cm}p{1.5cm}p{1.5cm}|p{1.5cm}p{1.5cm}p{1.5cm}|}
    \hline
          & \multicolumn{3}{c|}{AID with clusters by residuals} & \multicolumn{3}{c|}{AID with clusters by original data} \\ \hline
    $m$ & Time & $T$ & $r_{\mbox{\begin{scriptsize}agg\end{scriptsize}}}$ & Time & $T$ & $r_{\mbox{\begin{scriptsize}agg\end{scriptsize}}}$ \\ \hline
    \multicolumn{1}{|c|}{20} & 5.4   & 11.7  & 0.08\% & 7.5   & 11.4  & 0.07\% \\
    \multicolumn{1}{|c|}{40} & 13.4  & 11.3  & 0.11\% & 22.4  & 11.6  & 0.09\% \\
    \multicolumn{1}{|c|}{60} & 21.6  & 10.7  & 0.14\% & 46.1  & 11.6  & 0.11\% \\
    \multicolumn{1}{|c|}{80} & 33.6  & 9.9   & 0.17\% & 83.2  & 11.3  & 0.13\% \\ \hline
    \end{tabular}%
  \end{scriptsize}

\caption{Effect of Initial Clustering Data} \label{table_clusters_effect}
\end{table}%

Both approaches have similar aggregation rates, while the second approach has slightly smaller values. On the other hand, the first approach has a faster execution time, and the difference becomes significant as $m$ increases. First, the number of iterations is smaller for the first approach, which reduces the execution time. Second, even with the one iteration of Lloyd's algorithm, clustering small dimensional data (first approach) is much faster. Depending on the problem considered, it might be helpful to use the original data set for initial clustering. However, when the number of features ($m$) is large, obtaining rough initial clusters in a short time can improve the overall execution time.

\subsection{Performance of AID with Different Clustering Methods and Noise Levels}
\label{section_discussion_5}

In this section, different initial clustering methods are used to solve instances with various noise levels for the regression subset selection problem. The instances are generated using the same setting from Section \ref{section_comp_experiment} except that the random noises are added at the end. The random noises follow Normal distribution with mean = 0 and standard deviation = $\sigma \times $ average($x_j$) for $j \in J$, where $\sigma \in \{0.05,0.1,0.5,1\}$. In terms of signal to noise ratio SNR (= signal mean / standard deviation of the noise), we have four SNR values 20, 10, 2, and 1, which we denote as noise level 0, 1, 2, and 3, respectively, in this section. The instances are generated with fixed values $n = 1,000$, $m = 20$, and $p = 5$ and 100 instances are generated for each noise level. For the benchmark initial clustering, popular methods are employed: expectation maximization (EM), hierarchical agglomerative clustering (HR), full Lloyd's algorithm for K-means clustering (KM). Note that our initial clustering uses one iteration of Lloyd's algorithm.

\begin{figure}[ht]
     \begin{center}
        \subfigure[Time]{%
           \includegraphics[scale=0.85]{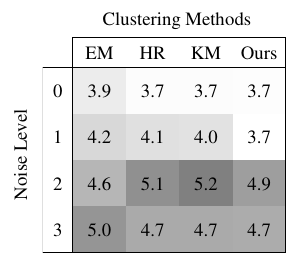} \label{fig_exp_mip_clustmethod_time}
        }\qquad
        \subfigure[$T$]{%
           \includegraphics[scale=0.85]{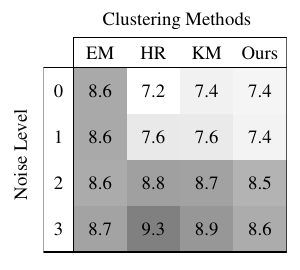} \label{fig_exp_mip_clustmethod_iter}
        }\qquad
        \subfigure[$r_{agg}$]{%
           \includegraphics[scale=0.85]{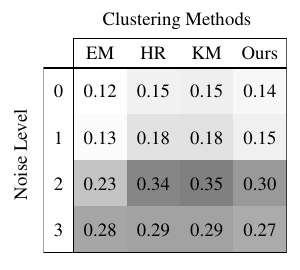} \label{fig_exp_mip_clustmethod_aggrate}
        }
    \end{center}
    \vspace{-0.5cm}
    \caption{Effect of clustering methods and noise levels}
\label{fig_exp_mip_clustmethod}
\end{figure}

In Figure \ref{fig_exp_mip_clustmethod}, the three performance measures (execution time, number of iterations, aggregation rate) of the algorithms are presented in heatmap matrix. For all of the three matrices, the rows are for the noise levels ($\sigma$) and the columns are for the algorithms. With the increasing noise level, all of the algorithms take longer times and more iterations, and the aggregation rates are higher. Among the four initial clustering algorithms, our algorithm is the best in all performance measures while the performance differences are not significant.

\subsection{Increasing Speed of AID}
\label{section_discussion_6}

When AID still cannot solve the problem in a reasonable amount of time, we can try to accelerate AID by sacrificing solution accuracy. In this section, we discuss how to balance between the speed and accuracy of AID. The speed of AID can be controlled in several ways. However, increasing the speed decreases the solution accuracy. Also, depending on the problem, some of the proposed approaches here may not be available or working properly.

The first approach is to decrease the accuracy of the aggregated problem solver. In each iteration of AID, the aggregated problem is solved. If we decrease the accuracy of the algorithm for the aggregated problem, time per iteration can be reduced. In Figures \ref{fg_sensitivity_sphere_delta} and \ref{fg_sensitivity_sphere_rho}, we present the effect of changing the tolerance of the QCLP solver for the aggregated problem for L1 regression over a sphere. For this experiment, we use instances with $n=100,000$ and $m=20$ and set $R=60,000$. The default tolerance for the aggregated problem is $10^{-7}$, and we check how AID performs over tolerances from $10^{-8}$ to $10^{-3}$. When the tolerance is less than $10^{-5}$, the relative error remains near zero, while the $\rho$ value decreases. Hence, we can speed-up AID without losing accuracy. However, when the tolerance is greater than $10^{-5}$, the relative error drastically increases, and the $\rho$ value also increases. Beyond a certain tolerance (in this experiment, $10^{-5}$), the execution time of AID increases with decreasing tolerance because the solutions are not accurate and AID needs to spend an extra number of iterations before termination.

The second approach to reducing the execution time of AID is to start with a smaller number of initial clusters ($|K_0|$). In Figures \ref{fg_sensitivity_l1pca_delta} and \ref{fg_sensitivity_l1pca_rho}, we present the effect of changing the number of initial clusters for $L_1$ PCA. For this experiment, we use instances with $n=20$ and $m=4$ and set $p=2$. As $|K_0|$ increases, the $\rho$ value increases, which shows that the execution time of AID is increasing. However, when $|K_0|$ increases, the relative error decreases because the initial aggregated data is closer to the original data. Park and Klabjan \cite{park15aid} recommend to start with the smallest $|K_0|$, and this has been effective for most problems. However, when the aggregated problem solver is sensitive, small $|K_0|$ can make the solution less accurate.

The third approach to reducing the execution time of AID is to control the tolerance of the optimality gap of AID ($tol$ in Algorithm \ref{algo_aid}). For this experiment, we use instances with $n=100,000$ and $m=50$ and set $p=10$. In Figures \ref{fg_sensitivity_mipreg_delta} and \ref{fg_sensitivity_mipreg_rho}, we present the effect of changing AID tolerance for subset selection for L1 regression. As $tol$ increases, the relative error increases, but the $\rho$ value decreases. This approach is useful when the last two or three iterations of AID are slow when solving the aggregated problem, while the optimality gap of AID is decreasing slowly.

\begin{figure}[ht]
     \begin{center}
        \subfigure[$\Delta$]{%
           \includegraphics[scale=0.5]{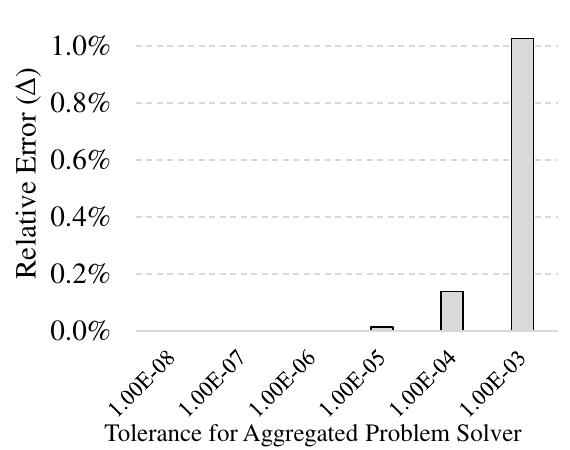} \label{fg_sensitivity_sphere_delta}
        }\quad
        \subfigure[$\Delta$]{%
           \includegraphics[scale=0.5]{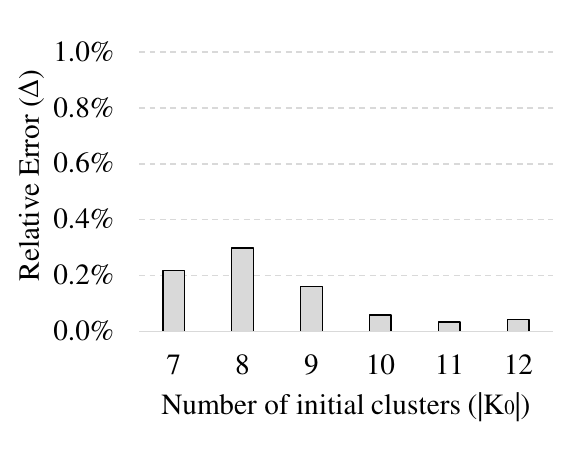} \label{fg_sensitivity_l1pca_delta}
        }\quad
        \subfigure[$\Delta$]{%
           \includegraphics[scale=0.5]{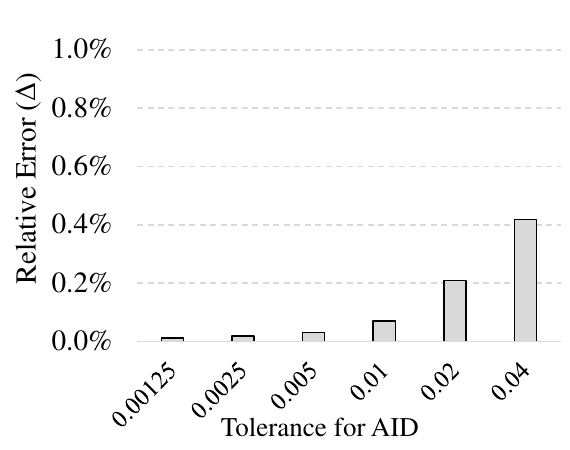} \label{fg_sensitivity_mipreg_delta}
        }
        
        \subfigure[$\rho$]{%
           \includegraphics[scale=0.5]{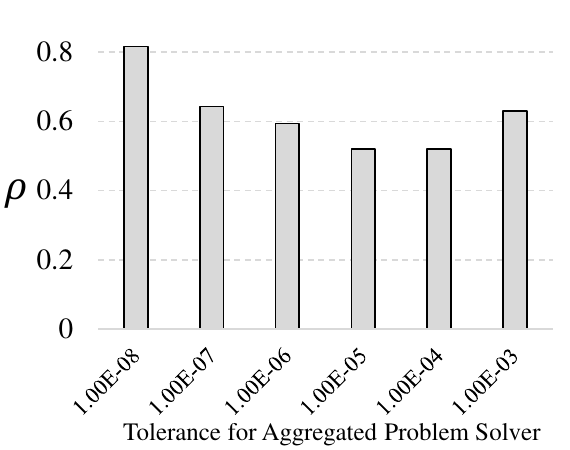} \label{fg_sensitivity_sphere_rho}
        }\quad
        \subfigure[$\rho$]{%
           \includegraphics[scale=0.5]{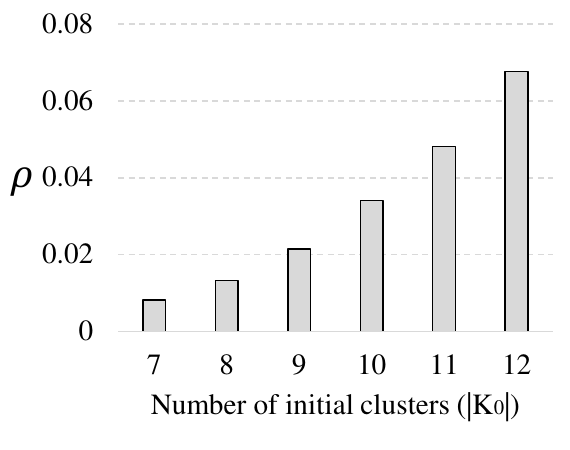} \label{fg_sensitivity_l1pca_rho}
        }\quad
        \subfigure[$\rho$]{%
           \includegraphics[scale=0.5]{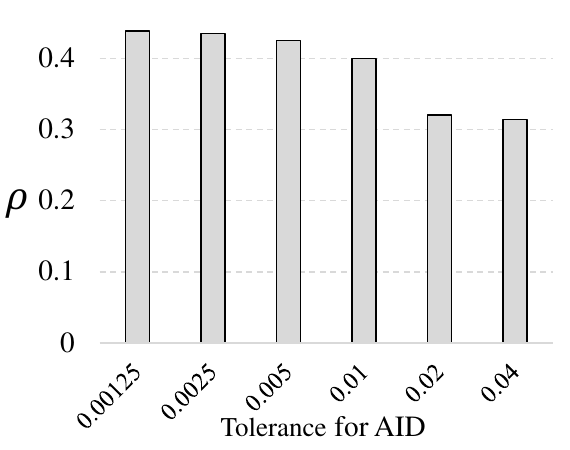} \label{fg_sensitivity_mipreg_rho}
        }
    \end{center}
    \vspace{-0.5cm}
    \caption{Increasing speed of AID: effect of changing tolerance for aggregated problem solver (Figures a and d), effect of changing number of initial clusters (Figures b and e), effect of changing AID tolerance (Figures c and f)}
\label{fg_sensitivity}
\end{figure}

\section{Conclusion}

In this paper, we propose an iterative data aggregation-based algorithm that monotonically converges to a global optimum for a generalized version of the L1-norm error fitting model with an assumption for the fitting function. Given a solver guaranteeing optimality, the proposed algorithm can solve any L1-norm error fitting problem optimally when the fitting function is associative. The computational experiment shows that the proposed algorithm outperforms the state-of-the-art benchmarks for the select L1-norm error fitting problems. We also discuss characteristics and implementation issues of AID. We hope the algorithm and discussion can help solving other L1-norm error fitting problems of \eqref{def_l1_fit}. 

Several future research directions are available. First, adding different types of constraints to \eqref{def_l1_fit} is worthwhile. For example, adding constraints for the entries can be considered to incorporate more complex modeling assumptions and conditions on the entries. The new entry-related constraints require modified proofs and algorithmic settings. Second, alternating aggregation and disaggregation steps has potential to reduce the running time of AID. The current algorithm starts with aggregating the original data in the initialization step and gradually disaggregate the aggregated data, which gives non-decreasing aggregated data size over iterations. Allowing multiple (and iterative) data aggregations can help reducing the running time while the optimality and convergence proofs can be non-trivial for the new algorithm. Finally, instead of the L1-norm error fitting model with an associative fitting function, different types of models and functions can be studied.

\vspace{0.5cm}

\noindent \textbf{Acknowledgments}
We appreciate the referees for their helpful comments that strengthen the paper. We also thank Southern Methodist University DataArts for allowing the author to use the data.

\bibliographystyle{abbrv}
\bibliography{aid_l1fit}

\end{document}